\documentclass[sigconf]{acmart}

\settopmatter{authorsperrow=5}

\AtBeginDocument{%
  }

\copyrightyear{2025}
\acmYear{2025}
\setcopyright{rightsretained}
\acmConference[KDD '25]{Proceedings of the 31st ACM SIGKDD Conference on Knowledge Discovery and Data Mining V.1}{August 3--7, 2025}{Toronto, ON, Canada}
\acmBooktitle{Proceedings of the 31st ACM SIGKDD Conference on Knowledge Discovery and Data Mining V.1 (KDD '25), August 3--7, 2025, Toronto, ON, Canada}
\acmDOI{10.1145/3690624.3709196}
\acmISBN{979-8-4007-1245-6/25/08}

\makeatletter
\gdef\@copyrightpermission{
 \begin{minipage}{0.3\columnwidth}
\href{https://creativecommons.org/licenses/by/4.0/}{\includegraphics[width=0.90\textwidth]{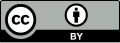}}
 \end{minipage}\hfill
 \begin{minipage}{0.7\columnwidth}
\href{https://creativecommons.org/licenses/by/4.0/}{This work is licensed under a Creative Commons
Attribution International 4.0 License.}
 \end{minipage}
 \vspace{5pt}
}
\makeatother

\usepackage[utf8]{inputenc} %
\usepackage[T1]{fontenc}    %

\usepackage{url}            %
\usepackage{booktabs}       %
\usepackage{amsfonts}       %
\usepackage{nicefrac}       %
\usepackage{microtype}      %
\usepackage{xcolor}         %

\newcommand{\alg}{{{ResMoE}}}

\usepackage{amsmath}
\usepackage{amssymb}
\usepackage{mathtools}
\usepackage{amsthm}
\usepackage{setspace}
\theoremstyle{plain}

\theoremstyle{definition}

\theoremstyle{remark}

\usepackage{paperpkg}
\usepackage{algorithm}
\usepackage{algpseudocode}

\usepackage{wrapfig}

\usepackage[capitalize,noabbrev]{cleveref}

\begin{document}
\title{ResMoE: Space-efficient Compression of Mixture of Experts LLMs via Residual Restoration}

\author{Mengting Ai}
\authornote{Mengting, Tianxin, and Yifan contributed equally to this work.}
\affiliation{
  \institution{
UIUC}
  \city{Champaign}
  \state{IL}
  \country{USA}
}
\email{mai10@illinois.edu}

\author{Tianxin Wei}
\authornotemark[1]
\affiliation{
  \institution{
UIUC}
  \city{Champaign}
  \state{IL}
  \country{USA}
}
\email{twei10@illinois.edu}

\author{Yifan Chen}
\authornotemark[1]
\authornote{Correspondence to: Yifan Chen and Jingrui He.}
\affiliation{
  \institution{
HKBU}
  \city{Hong Kong}
  \country{CHN}
}
\email{yifanc@hkbu.edu.hk}

\author{Zhichen Zeng}
\affiliation{
  \institution{
UIUC}
  \city{Champaign}
  \state{IL}
  \country{USA}
}
\email{zhichenz@illinois.edu}

\author{Ritchie Zhao}
\affiliation{
  \institution{
NVIDIA}
  \city{Redmond}
  \state{WA}
  \country{USA}
}
\email{rz252@cornell.edu}

\author{Girish Varatkar}
\affiliation{
  \institution{
Apple}
  \city{Cupertino}
  \state{CA}
  \country{USA}
}
\email{girish_v_varatkar@apple.com}

\author{Bita Darvish Rouhani}
\affiliation{
  \institution{
NVIDIA}
  \country{USA}
}
\email{brouhani@nvidia.com}

\author{Xianfeng Tang}
\affiliation{
  \institution{
Amazon}
  \city{Palo Alto}
  \state{CA}
  \country{USA}
}
\email{xianft@amazon.com}

\author{Hanghang Tong}
\affiliation{
  \institution{
UIUC}
  \city{Champaign}
  \state{IL}
  \country{USA}
}
\email{htong@illinois.edu}

\author{Jingrui He}
\authornotemark[2]
\affiliation{
  \institution{
UIUC}
  \city{Champaign}
  \state{IL}
  \country{USA}
}
\email{jingrui@illinois.edu}

\renewcommand{\shortauthors}{Mengting Ai et al.}

\begin{abstract}

Mixture-of-Experts (MoE) Transformer, the backbone architecture of multiple phenomenal language models, 
leverages sparsity by activating only a fraction of model parameters for each input token. The sparse structure, while allowing constant time costs, results in space inefficiency: we still need to load all the model parameters during inference.
We introduce \alg, an innovative MoE approximation framework that utilizes Wasserstein barycenter to extract a common expert (barycenter expert) and approximate
the residuals between this barycenter expert and the original ones. 
\alg~enhances the space efficiency for inference of large-scale MoE Transformers in a one-shot and data-agnostic manner without retraining while maintaining minimal accuracy loss, thereby paving the way for broader accessibility to large language models. We demonstrate the effectiveness of \alg~through extensive experiments on Switch Transformer, Mixtral, and DeepSeekMoE models. The results show that \alg~can reduce the number of parameters in an expert by up to 75\% while maintaining comparable performance. 
The code is available at \url{https://github.com/iDEA-iSAIL-Lab-UIUC/ResMoE}.

\end{abstract}

\keywords{Mixture-of-Experts, Compression, Optimal Transport, Wasserstein Barycenter}

\maketitle
\section{Introduction}

The profound impact of the Transformer architecture in the domain of machine learning is undeniable, for the fields including natural language processing 
~\citep{NIPS2017_3f5ee243,devlin2019bert,fedus2022switch,anil2023palm,OpenAI22,raffel2023exploring} and computer vision ~\citep{liu2021swin,Wang_2021_ICCV,Fan_2021_ICCV}, to name a few. To further improve the capabilities of pre-trained large language models (LLMs), one general strategy is to scale up their parameters. Mixture-of-Experts (MoE) \citep{shazeer2017outrageously} extends the traditional feedforward neural network (FFN) layer by replacing a single multilayer perceptron (MLP) with multiple MLPs, referred to as ``experts''. While enhancing the performance, sparse MoE keeps computing costs (FLOPs) comparable to the original dense model, as only a few selected experts will be activated each time. The framework of an MoE layer is demonstrated in Fig.~\ref{fig:moe}. 
Specifically, the input token $x$ is passed to the router gate network, returning the sparse and normalized top-$k$ scores used to activate the following experts. Only experts with a score larger than 0 will be activated, and the continued results will then be calculated through those activated expert MLPs. 
The output $y$ will then be obtained through a weighted sum of each activated expert's output $y_k$.
Switch Transformer \citep{fedus2022switch} 
exemplifies this approach by expanding the T5 model \citep{raffel2023exploring} to an MoE structure, scaling it up to at most 2,048 times the size of the original dense T5 model. Similarly, Mixtral~\citep{jiang2024mixtral} upscales Mistral 7B~\citep{jiang2023mistral} to an 8$\times$7B MoE structure, achieving performance that matches or even surpasses that of Llama2 70B~\citep{touvron2023llama}. DeepSeekMoE~\citep{dai2024deepseekmoe} utilizes fine-grained experts compared to the other structures, with 64 experts per layer.

\begin{figure}[t]
    \centering
    \includegraphics[width=1.0\columnwidth]{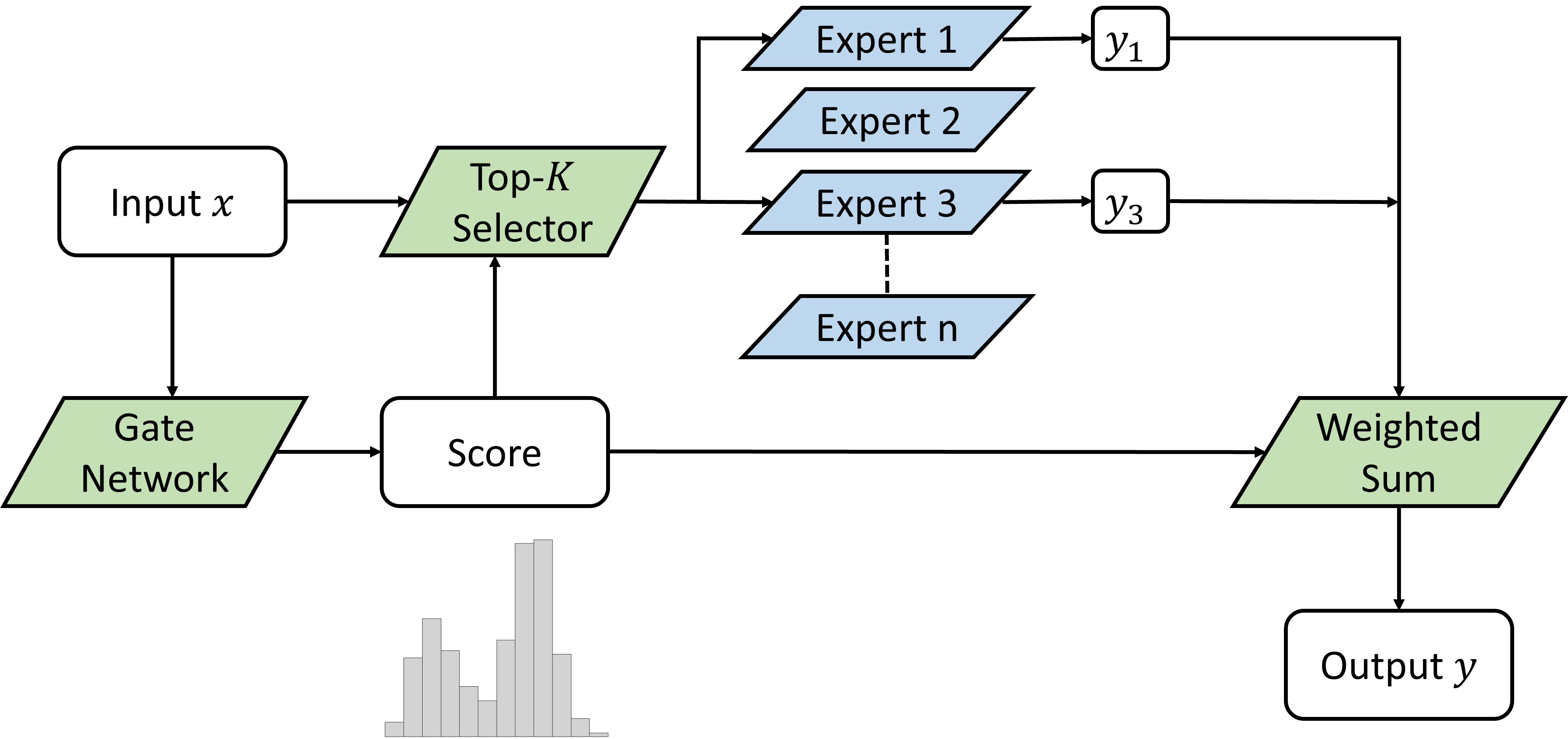}
    \caption{In this illustrative example of MoE layers, the Top-{\em K} Selector, along with the Gate Network--often referred to as the `router'--selects Experts 1 and 3 based on their scores for the given input. Figure taken from \citep{ai2025mlpfusionefficientfinetuning}.
    }
    \label{fig:moe}
\end{figure}

However, the enormous number of parameters has now become a bottleneck for MoE Transformers~\citep{kong2023swapmoe}, since they require much more GPU memory to load the model even if only part of the parameters are activated each time.
The expert size for Mixtral reaches 176.2M, and the presence of 8 or even more experts in each layer exacerbates the memory demands, bringing a strong need to compress the experts in the MoE structure. 
To give an example, the total model size of Mixtral is 87.0 GB, while the corresponding size of the dense model Mistral is only 13.5 GB.

To leverage the capabilities of MoE LLMs, we revisit several (seemingly unrelated while inherently connected) research avenues below. 
One approach is model fusion \citep{singh2020model,ainsworth2023git}, which involves combining multiple general MLPs. This technique can be adapted to merge experts in MoE models as well. 
More recently, various studies have introduced the concept of expert merging \citep{he2023merging,li2023merge,xue2022student,pmlr-v162-liu22k,stoica2024zipitmergingmodelsdifferent} and expert pruning \citep{lu2024expertsequalefficientexpert}, as a method to reduce the number of experts within each layer of the MoE model. 
Nevertheless, we note the 
direct reduction in the total number of experts 
potentially leads to a substantial loss of the specialized knowledge that individual experts possess (see an illustrative analysis in \Cref{sec:limit}).

To address the aforementioned issues, we introduce \alg, an MoE approximation framework. Our approach capitalizes on approximating the MoE models with fewer parameters by utilizing Wasserstein barycenter techniques~\citep{peyr2020computational}. 
We formulate a distributional representation of experts and extract their common characteristics to obtain the barycenter expert. Subsequently, we propose to employ either unstructured pruning \citep{lee2018snip} or singular value decomposition (SVD)~\citep{NIPS2014_2afe4567} (as a pilot example) to approximate the residual matrices between this barycenter expert and each specific expert.
In summary, the contribution of our work is three-fold:
\begin{itemize}[itemsep=-0.1em, topsep=-0.1em, leftmargin=*]
    \item We introduce Wasserstein barycenter and residual restoration into MoE approximation, aiming to maintain the common and distinctive attributes of each expert with fewer parameters.

    \item We propose \alg, a practical MoE Transformer approximation framework that aims to improve space efficiency in a one-shot and data-agnostic manner, with no extra training required.     
    \item We validate \alg~through extensive experiments on both the encoder-decoder Switch Transformer model, as well as the decoder-only models, Mixtral and DeepSeekMoE. Our results demonstrate that \alg~can reduce the number of parameters in an expert by up to 75\% while incurring only marginal performance loss, verifying its effectiveness and versatility.
\end{itemize}

\section{Related Work}

\textbf{General model compression techniques.}
The focus of deep learning model compression research primarily involves system-level optimization. \emph{Quantization} aims at hardware efficiency by reducing model weight bit-depth from 32-bit floating point (FP32) to 8-bit integers (INT8)~\citep{bhandare2019efficient,dettmers2022llmint8,10.1145/3583780.3615499} or even lower bits ~\citep{li-etal-2022-dq,tao-etal-2022-compression,frantar2023gptq,10.1145/3534678.3539452}. Our focus, however, is on reducing the parameter count of the MoE model, making quantization methods not directly related.

Additionally,
\emph{knowledge distillation} \citep{gou2021knowledge,10.1145/3534678.3539315,10.1145/3340531.3412005} aims to transfer knowledge from pre-trained LLMs to smaller models. However, this approach requires extensive retraining, involving both the original LLM and the compact model.
\emph{Truncated singular value decomposition} (SVD) \citep{denton2014exploiting} has been used to streamline CNNs by reducing redundancy through linear structure exploitation within networks, yet it faces limits in representational capacity, often leading to decreased performance due to overly aggressive dimension reduction. 
\emph{Pruning techniques} \citep{liu2018rethinking,10.1145/3580305.3599284}, evolving with the Lottery Tickets Hypothesis \citep[LTH]{frankle2018lottery}, seek efficient sub-networks within larger models but require extensive retraining to maintain accuracy. 
While some one-shot pruning methods \citep{wang2020picking,sun2024a} do exist, they remain computationally expensive and are not specifically tailored for the structure of MoE, bringing concerns about whether such methods can adequately ensure that the compressed models retain their effectiveness for downstream tasks.

\textbf{Mixture-of-Expert (MoE) transformer compression}.
Ra-ther than applying existing compression techniques individually to the expert MLP, MC-SMoE \citep{li2023merge}, MEO \citep{he2023merging}, and OneS \citep{xue2022student} merge the experts into smaller groups, reducing the count of the experts. 
Expert pruning \citep{lu2024expertsequalefficientexpert,muzio2024seermoesparseexpertefficiency} follows a similar aspect, pruning the less important experts to reduce the size.
This approach faces challenges in deciding the experts to retain, potentially leading to loss of information due to sub-optimal decisions.
\citet{gao-etal-2022-parameter} instead proposed to keep each expert, divide them into several sections, and share the core section among them. This method does not align with our goal
since they aimed to efficiently train a new MoE-like structure from scratch, instead of compressing an existing one.
Alternatively, we note fusion-based methods~\citep{singh2020model,ainsworth2023git}, originally proposed for consolidating distinct models into a single one, 
can be dynamically adapted for consolidating MoE's experts.
These methods utilize the principles of permutation and optimal transport and are implemented layer-wise, which
requires applying the permutations derived from preceding layers to the next one. The characteristic incurs overhead due to the extra time required for permutations.

\section{Preliminaries and Notation}
This section provides the background of MoE, optimal transport, and Wasserstein barycenter.

\subsection{Mixture-of-Experts Modules}

Throughout this paper, %
we consider the classical setting of MoE modules for the ease of analysis, where each expert takes the form of a multilayer perceptron (MLP) in a feed-forward network (FFN) sub-layer of a Transformer. 
It is worth noting that there exist different types of expert network architectures (c.f.\ \Cref{sec:mix_deep}).

Each Mixture-of-Experts (MoE) layer comprises $N$ experts.
The $k$-th expert $E_k$ (a function to transform input vector $\mtx x$ to a new feature) in an FFN sub-layer is denoted as:
\begin{align*}
    E_k(\mtx x) = \mtx W_k^{(2)} \sigma\left(\mtx W_k^{(1)} \mtx x + \mtx b_k^{(1)}\right) + \mtx b_k^{(2)},
\end{align*}
where $\sigma(\cdot)$ is the element-wise activation function. The input $\mtx x\in \mb R^{p}$, and $(\mtx W_k^{(1)}, \mtx b_k^{(1)}) \in \mb R^{p_\text{I} \times (p+1)}$, $(\mtx W_k^{(2)}, \mtx b_k^{(2)}) \in \mb R^{p \times (p_\text{I}+1)}$ are respectively the weight matrices and bias vectors in the linear transforms of the MLP (with input/output dimension $p$ and inner dimension $p_\text{I}$). The output of the MoE layer is given by: $\sum_{k=1}^{N}  [G(\mtx x)]_k \cdot E_k(\mtx x)$.
Here $G(\mtx x) = \operatorname{Softmax}\left(\operatorname{TopK}\left( \mtx W_g \mtx x \right)\right)$ 
returns the normalized sparse router gating vector for all experts, where $\operatorname{TopK}(\mtx g_i)=\mtx g_i$ when $\mtx g_i$ is within the top-{\em k} values of $\mtx g\in \mathbb{R}^N$, otherwise $\operatorname{TopK}(\mtx g_i)=-\infty$;
$\mtx W_g \in \mb R^{N \times p}$ represents the linear transform, turning the input $\mtx x$ into the logit for each expert.
The whole framework of the MoE layer is shown in Fig. \ref{fig:moe}.

The space bottleneck \citep{kong2023swapmoe} comes from the large size of experts (ranging from $8$ to $64$ and even more~\citep{fedus2022switch,dai2024deepseekmoe}) and the tremendous size of the weight matrices in each expert 
(e.g., $176.2$M parameters for each expert in Mixtral~\citep{jiang2024mixtral}).
The sparse design renders the total number of parameters redundant compared to the base dense model. Even though only a part of the parameters is activated each time, the whole model still needs to be loaded in the RAM.
In this paper, we aim to address the redundancy problem while retaining the effectiveness of pre-trained MoE models.

\subsection{Optimal Transport and Wasserstein Barycenter}

Optimal transport (OT) theory has achieved great success in depicting the underlying geometry of distributions~\citep{peyr2020computational}. We consider two distributions $\mu=\sum_{i=1}^n \bm{\alpha}_i \delta_{x_i}$ and $\nu=\sum_{j=1}^m \bm{\beta}_j \delta_{y_j}$ with $\bm{\alpha}_i,\bm{\beta}_j$ as masses  respectively assigned to points $x_i,y_j$, and $\delta_x$ being the Dirac unit mass located on $x$. (In this paper, $\mu, \nu$ will always be the discrete distributions.)
OT reflects a process of transporting the mass from positions $x_i$'s to $y_j$'s (transforming the source distribution $\mu$ to the target distribution $\nu$) with the minimal overall cost, in which the cost of transporting a unit mass from $x_i$ to $y_j$ is given by the cost function $D(x_i,y_j)$. 

A \emph{transport plan} can be specified by a matrix $\mathbf{M}\in\mathbb{R}^{n\times m}$, where $\mathbf{M}_{i,j}$ indicates the mass to be transported from $x_i$ to $y_j$. 
We note that the column and row sums of $\mathbf{M}$ respectively equal to $\bm{\alpha}$ and $\bm{\beta}$, implying all the masses in $\mu$ are transported to the desired points in $\nu$, i.e., $\mathbf{M}\in\Pi(\bm{\alpha},\bm{\beta}) \defeq \{\mathbf{M}\in\mathbb{R}^{n\times m}|\sum_{j=1}^m\mathbf{M}_{i,j}=\bm{\alpha}_{i}, \sum_{i=1}^n\mathbf{M}_{i,j}=\bm{\beta}_{j}\}$. 
OT seeks the optimal plan to transport $\mu$ to $\nu$ w.r.t\ the overall transportation cost, formulated as~\citep{peyr2020computational}:
\begin{align*}
    \text{OT}(\mu,\nu) \defeq \mathop{\argmin}_{\mathbf{M}\in \Pi(\bm{\alpha},\bm{\beta})}\sum_{i,j}\mathbf{M}_{i,j}\mathbf{C}_{i,j}=\underset{\mathbf{M}\in \Pi(\bm{\alpha},\bm{\beta})} \argmin \langle \mathbf{M},\mathbf{C} \rangle,
\end{align*}
where $\mathbf{C}\defeq\left[ D(x_{i},y_{j}) \right]_{ij}\in \mathbb{R}^{{n}\times m}$ is the cost matrix. 

Setting the cost function as $D(x_{i},y_{j}) = \|x_{i}-y_{j}\|^2$,
we can obtain $2$-Wasserstein distance \citep{peyr2020computational} as:
\begin{equation}
W^2_2(\mu,\nu)\defeq
\underset{\mathbf{M}\in \Pi(\bm{\alpha},\bm{\beta})}{\min}\left \langle \mathbf{M},\mtx C\right \rangle. \notag
\end{equation}
In this paper, we specifically focus on the free-support Wasserstein barycenter problem induced by 2-Wasserstein distance.
Given a set of distributions $\mu_1$, \dots, $\mu_N$, the Wasserstein barycenter $\bar{\mu}$ is the ``average'' distribution in terms of the Wasserstein distance. 
To regulate the form of $\bar{\mu}$ in numerical computation, we specify $\bar{\mu}$ as a uniform distribution on $n$ points,
and optimize it through:
\begin{equation}
\label{eq:wb_p}
    \bar{\mu} = \underset{\{x_i\}_{i=1}^{n}}{\argmin} \frac{1}{N}\sum_{k=1}^{N}  W_2^2 \paren{\mu_k, \sum_{i=1}^{n}\frac{1}{{n}}\delta_{x_i}}.
\end{equation}
We comment \citet{cuturi2014fast} have provided efficient numerical algorithms for the free-support Wasserstein barycenter problem above, which will be heavily utilized in our implementations. 
In a nutshell, we conclude Wasserstein barycenter captures the underlying advection in the distribution space, offering a powerful tool for aggregating distributions in a geometrically meaningful way.

\section{Proposed Methodology}

\begin{figure}[tbp]
\centering
  \includegraphics[width=1\columnwidth]{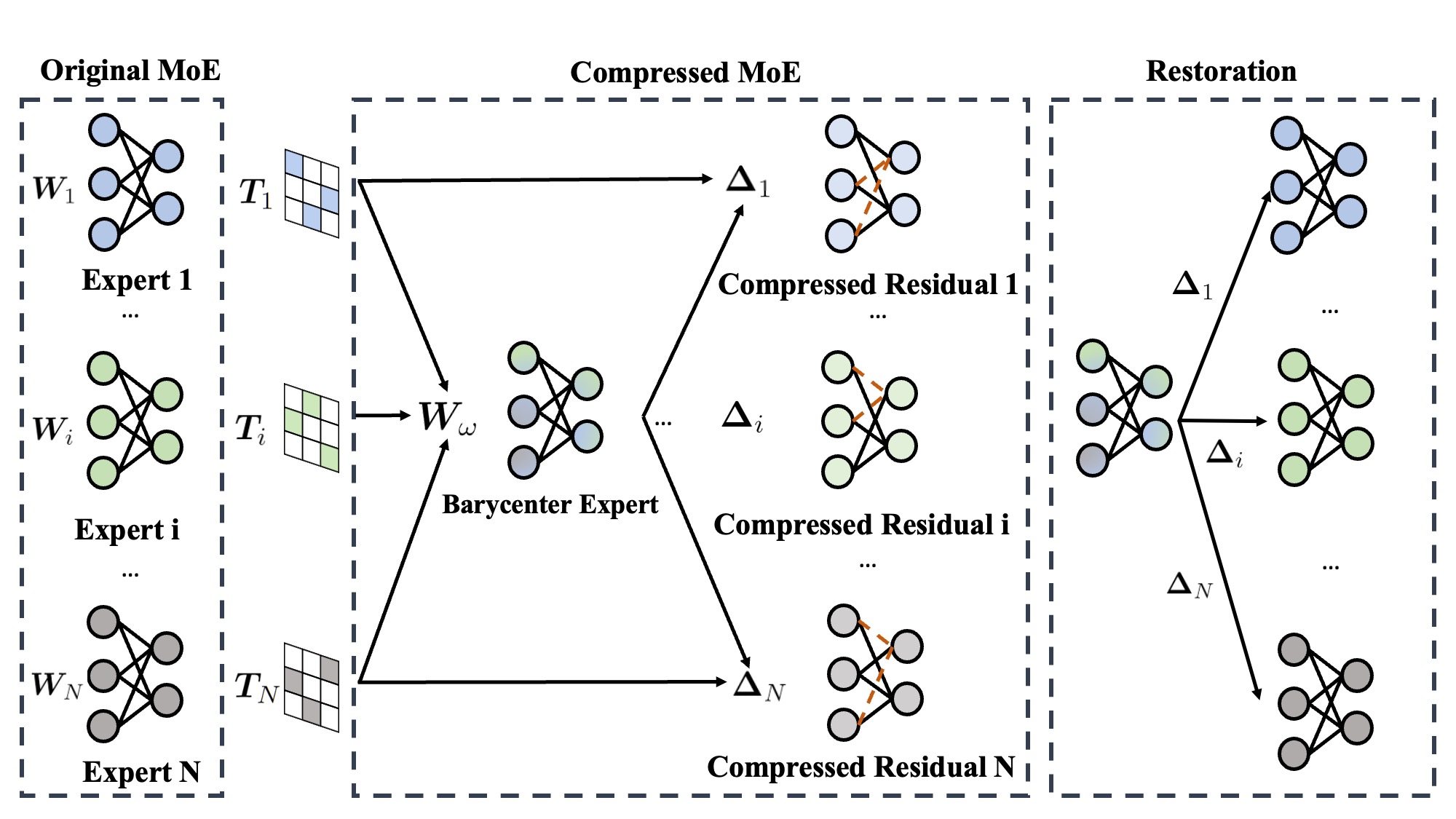}
  \caption{The overall framework of \alg. We introduce permutation matrices $\mtx T$ to obtain the barycenter expert $\mtx {W_{\omega}}$ from a distributional view. Instead of compressing the original experts directly, we opt to compress the residual matrices ($\mtx \Delta$, illustrated with lighter colors) between each expert and the barycenter expert, with the capability to dynamically and efficiently restore the original matrices during inference. We illustrate the concept using unstructured pruning as an example, with dashed orange lines indicating the pruned connections within the network.}%
\label{fig:resmoe}
\end{figure}

In this section, we first analyze the limitations of existing fusion strategies, and then give a detailed introduction to \alg, along with its visualization provided in Figure \ref{fig:resmoe} and pseudocode in \Cref{app:pseudocode}. For a comprehensive visual comparison between \alg~and previous baseline methods, please refer to
Figure \ref{fig:compare}.

\subsection{Limitations of Existing Fusion Strategies}
\label{sec:limit}
In advance of our proposal in \Cref{sec:strategy}, we first review the limitations of existing fusion/merge strategies, 
which mainly serves as the motivation for developing the ResMoE framework.

Alignment-based model fusion~\citep{singh2020model,ainsworth2023git} is proposed to fuse multiple models, that can be adapted in the MoE structure to fuse MLPs (which may have more than two layers) and relates to OT.
In OT Fusion~\citep{singh2020model}, for a two-layer MLP, their algorithm starts from aligning the first layer of each expert and then pre-aligns the second layer with the permutation matrix obtained from the first layer, repeating the procedure to align the second layer further.
Similarly, in Git Re-Basin \cite{ainsworth2023git}, they propose to align the weights through a greedy loop for each layer, which demonstrates zero-barrier linear mode connectivity~\citep{10.5555/3524938.3525243} between independently trained models on the same dataset. We note the alignment-based model fusion technique implies a layer-by-layer strategy to formulate MoE layers as distributions and to merge experts. 
Moreover, expert merging~\citep{he2023merging,li2023merge,xue2022student}
has been recently introduced to reduce the number of experts in MoE modules. This approach merges experts using task-specific information, such as router gating score distribution or router activation frequency.

Both the model fusion and expert merging reduce the number of experts to compress the model, while we remark that the direct reduction in the number of experts may lead to a huge deviation from the original module output, especially in the zero-shot setting.

To better understand this, we first revisit the MoE layer from an all-experts matrix perspective. 
We define the router matrix $\mtx R$ as:

\begin{small}
\begin{align*}
\mtx R \defeq \operatorname{diag}(G(\mtx x)) \otimes \mtx I_{p_{\text {I }}}=\begin{bmatrix}
	  [G(\mtx x)]_1\mathbf{I}&   & &  \\
	  & ... &  & \\
	  &   & [G(\mtx x)]_N \mtx I
	 \end{bmatrix}_{N\cdot p_{\text{I}} \times N\cdot p_{\text{I}}},
\end{align*}
\end{small}%
where $\otimes$ is the Kronecker product, $\mtx I$ is the identity matrix, and $G(\mtx x)$ is a sparse score vector. The weight matrices in the MoE layer are overall denoted as:
\begin{align*}
&\mtx W^{(1)}=\left(\begin{array}{lll}
    \mtx W_1^{(1)}  \cdots  \mtx W_N^{(1)}
    \end{array}\right)_{ N \cdot p_{\text {I}}\times p}^\T, \\
    &\mtx W^{(2)}=\left(\begin{array}{lll}
    \mtx W_1^{(2)}  \cdots  \mtx W_N^{(2)}
    \end{array}\right)_{p \times N \cdot p_{\text {I }}}.
\end{align*}
The output of the MoE layer can accordingly be expressed as (omitting the bias term for simplicity):
\begin{align}
    \mtx y = \mtx W^{(2)} \mtx R \sigma\left(\mtx W^{(1)} \mtx x\right),
\label{eqn:moe_y}
\end{align}
in which $\mtx R$ encapsulates crucial expert knowledge and exhibits high sparsity, as typically only a selected number of experts are activated within each MoE layer.

To analyze the difficulty of compressing the space-occupying $N\cdot p_\text{I} \times p$ matrices $\mtx W^{(1)}, (\mtx W^{(2)})^\T$, we turn to a theoretical framework \emph{oblivious subspace embedding}~\citep[OSE]{cohen2016nearly}
which can preserve any $p$-dimensional subspace of $\mb R^{N\cdot p_\text{I}}$ (here, we focus on the subspaces spanned by $\mtx W^{(1)}$ and $(\mtx W^{(2)})^\T$) through a $d \times N\cdot p_\text{I}$
random projection matrix $\mtx \Pi$, with $d$ being the projection dimension.
The projection $\mtx \Pi \mtx W^{(1)}$ and $\mtx \Pi (\mtx W^{(2)})^\T$ can be considered as a sketch of the expert merging strategy.

We then recognize the limitation of expert merging via the lens of OSE.
As per \citet{cohen2016nearly}, $d$ should be at least $\m O(p \log p / \varepsilon^2)$,
where $\varepsilon$ is the error tolerance level;
in compressing MoE ($N$ will NOT go to infinity), however, the scale of $p \log p / \varepsilon^2$ will be even larger than $N\cdot p_\text{I}$ by simply setting $\varepsilon=0.05$.
The space gain from expert merging is thus usually marginal in this very practical case. In this regard, reducing the number of experts per MLP layer might not be that practical. Instead, we propose each expert should be kept during compression.

To alleviate the deficiency of OSE above, MC-SMoE \citep{li2023merge}
leverages the information from training data to reduce the scale of $d$, rendering the merging no longer data-``oblivious''.
However, due to the requirement to first do fine-tuning and the restriction that during inference the test data have to be i.i.d.\ as the training data,
this therapy will be less valid in the zero-shot setting we mainly consider.
We also empirically verify the above observations on the expert merging strategies in \Cref{sec:nlg_exp}.

\subsection{An Extraction Strategy Specific to the MoE Structure}
\label{sec:strategy}
Following our speculation that each expert in the MLP layer should be kept, 
we propose \alg, a framework to compress the representation of those experts through a Wasserstein barycenter expert and the residual matrices between each expert and the barycenter expert.
Specifically, we extract an expert $E_\omega$ with the common pattern from all the experts, 
and then model the difference between $E_\omega$ and $E_k$ by fewer parameters (we will introduce the difference modeling in \Cref{sec:diff}).
We first revisit a viewpoint that an MLP can be taken as the ensemble of multiple bottleneck-1 sub-MLPs~\citep{chen2023ntk, wang2022exploring, yuan2022distributed}.
We rewrite the MLP output as follows:
\begin{align}
E_k(\mtx x) = \sum_{i=1}^{p_\text{I}} \left[\mtx W_{k, \cdot, i}^{(2)} \cdot \sigma\paren{
\dotp{\mtx W_{k, i, \cdot}^{(1)}}{\mtx x} + \mtx b_{k, i}^{(1)}}  \right] + \mtx b_k^{(2)},
\label{eqn:mlp_sum}
\end{align}
where by convention we represent the $i$-th row (resp.\ column) in the weight matrix $\mtx W_k^{(1)}$ (resp.\ $\mtx W_k^{(2)}$) as $\mtx W_{k, i, \cdot}^{(1)}$ (resp.\ $\mtx W_{k, \cdot, i}^{(2)}$), and $\sigma(\cdot)$ is the activation function.
The summation implies that an MLP is the ensemble of a few bottleneck-1 sub-MLPs (the sum on the right-hand-side above), which allows a distributional perspective of MLP since the order of the sum does not matter.

Note that various expert network architectures can all be expressed using the structure of multiple bottleneck-1 sub-MLPs. The FFN in both Mixtral and DeepSeekMoE models uses a gated network following Llama \citep{touvron2023llama}, 
whose detailed form can be found in \Cref{sec:mix_deep}.

Since $\mtx b_i^{(2)}$ is not involved in the summation in \Cref{eqn:mlp_sum},
we accordingly quantify the extraction as, after proper permutation, minimizing the squared Frobenius norm of differences between the original weight matrices in each expert and the weight matrices in the barycenter expert:

\begin{align}
\label{eqn:min_frob}
    \min_{\substack{\mtx W_\omega^{(1)}, \mtx b_\omega^{(1)}, \mtx W_\omega^{(2)} \\
    \mtx T_k \in \m P, k \in [N]}}
    &\frac{1}{N}\sum_{k=1}^N \bigg[\left\|\mtx T_k \bigg[\mtx W_k^{(1)}, \mtx b_k^{(1)}\bigg] - \bigg[\mtx W_\omega^{(1)}, \mtx b_\omega^{(1)}\bigg] \right\|_F^2  \\
    &\qquad +\left\|\mtx W_k^{(2)} \mtx T_k^\T - \mtx W_\omega^{(2)} \right\|_F^2\bigg],
    \notag
\end{align}
where $\m P$ is the class of $p_\text{I}$-by-$p_\text{I}$ permutation matrices and $\mtx W_\omega^{(1)} \in \mb R^{p_\text{I} \times p}, \mtx b_\omega^{(1)} \in \mb R^{p_\text{I}}, \mtx W_\omega^{(2)} \in \mb R^{p \times p_\text{I}}$ are the weight matrices in the barycenter expert $E_\omega$.
The introduction of the permutation matrices $\mtx T_k$'s aligns with the distributional perspective of MLPs, that an MLP $E_k$ is equivariant to the row permutation of its design matrix $\mtx W_k= \brkt{\mtx W_k^{(1)}, \mtx b_k^{(1)}, (\mtx W_k^{(2)})^\T} \in \mb R^{p_\text{I} \times (2p + 1)}$ as the sum's order in \Cref{eqn:mlp_sum} is inconsequential.
It is worth noting that simultaneously permuting $\mtx W_k^{(1)}$ and $\mtx W_k^{(2)}$ does not affect the expert's output since the permutation matrix is orthogonal.

To solve problem~\eqref{eqn:min_frob},
we propose to address the distribution of sub-MLPs within each expert, rather than layer-by-layer. The summation in
\Cref{eqn:mlp_sum} clearly shows the correspondence between the $i$-th row $\mtx W_{k, i, \cdot}^{(1)}$ of $\mtx W_k^{(1)}$ and the $i$-th column $\mtx W_{k, \cdot, i}^{(2)}$ of $\mtx W_k^{(2)}$; to obtain the ``embedding'' of the sub-MLPs for the distributional formulation, 
we consider the original MLP $E_k$ as a design matrix $\mtx W_k$.
We then run the algorithm for free-support Wasserstein barycenter~\cite{cuturi2014fast} to obtain the weight matrix $\mtx W_\omega = \left[\mtx W_\omega^{(1)}, \mtx b_\omega^{(1)}, (\mtx W_\omega^{(2)})^\T\right]$ for the barycenter expert,
with $\mtx W_\omega$ being exactly the solution to the minimization problem \eqref{eqn:min_frob}.

To present the result, we respectively define $\mu_k$'s as the uniform distributions defined on the rows of the given $\mtx W_k \in \mb R^{p_\text{I} \times (2p+1)}$, for all $k = 1, 2, \cdots, N$, i.e., $\mu_k=\sum_{i=1}^{p_\text{I}} {1}/{p_\text{I}} \cdot \delta_{\mtx{W}_{k,i,\cdot}}$.
Similarly $\mu_\omega$ is uniformly distributed over the rows of $\mtx W_\omega \in \mb R^{p_\text{I} \times (2p+1)}$, 
and the notation $\mu_\omega$ is interchangeable with $\mu_\omega(\mtx W_\omega)$, which highlights the dependence on $\mtx W_\omega$.
We further denote the optimal transport matrix (w.r.t.\ $W_2$ distance) from $\mu_k$ to $\mu_\omega$ as $\text{OT}(\mu_k, \mu_\omega)$ as the solution of 
\Cref{eq:wb_p}.
We can then give the following proposition (the proof is deferred to \Cref{sec:proof_equivalence}).

\begin{restatable}[]{proposition}{equivalence}
Consider the solution $\mtx W_\omega$ to the following free-support WB problem
\begin{align}
\label{eq:wb}
    \min_{\mtx W_\omega} \frac{1}{N} \sum_{k=1}^N W_2^2(\mu_k, \mu_\omega(\mtx W_\omega)).
\end{align}
Then $\mtx W_\omega$, along with $\mtx T_k = p_{\operatorname{I}} \cdot \operatorname{OT}\left(\mu_k, \mu_\omega(\mtx W_\omega)\right)$, is the solution to the optimization problem \eqref{eqn:min_frob}.
\label{prop:equivalence}
\end{restatable}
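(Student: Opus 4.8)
The plan is to show that, after folding the inner minimization over the permutations $\mtx T_k$ into the transport cost, the objective of the Frobenius problem \eqref{eqn:min_frob} and that of the free-support Wasserstein barycenter problem \eqref{eq:wb} coincide up to the strictly positive multiplicative constant $1/p_\text{I}$. An argmin is invariant under such rescaling, so the two problems share the same optimal $\mtx W_\omega$, and the optimizing $\mtx T_k$ is then read off from the optimal transport plan. The single nontrivial ingredient is the classical fact that, between two uniform distributions supported on equally many atoms, an optimal transport plan can be taken to be a scaled permutation matrix.

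Concretely, I would first rewrite the summand of \eqref{eqn:min_frob} in terms of the stacked design matrix $\mtx W_k = \brkt{\mtx W_k^{(1)}, \mtx b_k^{(1)}, (\mtx W_k^{(2)})^\T}$ and its barycenter analogue $\mtx W_\omega$. Since the Frobenius norm is invariant under transposition, $\|\mtx W_k^{(2)}\mtx T_k^\T - \mtx W_\omega^{(2)}\|_F^2 = \|\mtx T_k (\mtx W_k^{(2)})^\T - (\mtx W_\omega^{(2)})^\T\|_F^2$, so the bracketed term equals $\|\mtx T_k \mtx W_k - \mtx W_\omega\|_F^2$ block by block. Thus \eqref{eqn:min_frob} is exactly $\min_{\mtx W_\omega, \{\mtx T_k\}} \tfrac1N \sum_k \|\mtx T_k \mtx W_k - \mtx W_\omega\|_F^2$.

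Next I would evaluate $W_2^2(\mu_k, \mu_\omega)$. By definition it is $\min_{\mtx M \in \Pi} \langle \mtx M, \mtx C\rangle$ with $\mtx C_{ij} = \|\mtx W_{k,i,\cdot} - \mtx W_{\omega,j,\cdot}\|^2$ and both marginals uniform on $p_\text{I}$ atoms, so $\Pi$ is $1/p_\text{I}$ times the Birkhoff polytope of doubly stochastic matrices. The cost is linear in $\mtx M$, hence its minimum over this polytope is attained at a vertex; by the Birkhoff--von Neumann theorem every vertex has the form $(1/p_\text{I})\mtx T$ with $\mtx T \in \m P$ a permutation matrix. Expanding $\langle (1/p_\text{I})\mtx T, \mtx C\rangle$ as a sum over matched rows and comparing with the row-wise expansion of the Frobenius norm yields $W_2^2(\mu_k, \mu_\omega) = (1/p_\text{I}) \min_{\mtx T_k \in \m P} \|\mtx T_k \mtx W_k - \mtx W_\omega\|_F^2$, with the minimizing $\mtx T_k$ equal to $p_\text{I}\cdot \operatorname{OT}(\mu_k, \mu_\omega)$. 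Finally I would assemble the pieces: because the inner permutations decouple across $k$ for a fixed $\mtx W_\omega$, the joint minimum in \eqref{eqn:min_frob} equals $\min_{\mtx W_\omega} \tfrac1N \sum_k \min_{\mtx T_k}\|\mtx T_k\mtx W_k - \mtx W_\omega\|_F^2 = p_\text{I}\cdot \min_{\mtx W_\omega}\tfrac1N\sum_k W_2^2(\mu_k,\mu_\omega)$, i.e.\ \eqref{eqn:min_frob} is $p_\text{I}$ times \eqref{eq:wb}. Since $p_\text{I} > 0$ the two share the same minimizer $\mtx W_\omega$; substituting it back, the per-$k$ optimal alignment is $\mtx T_k = p_\text{I}\cdot \operatorname{OT}(\mu_k,\mu_\omega(\mtx W_\omega))$, the claimed pairing.

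\textbf{Main obstacle.} The crux is the Birkhoff--von Neumann reduction, namely that a linear objective over the transport polytope is minimized at a permutation vertex; everything else is bookkeeping. The one place demanding care is the orientation of the permutation — whether the optimal plan or its transpose implements the left-multiplication $\mtx T_k \mtx W_k$ — but since $\m P$ is closed under transposition the minimal value, and hence the asserted identity, is unaffected.
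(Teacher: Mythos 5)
Your proof is correct and takes essentially the same route as the paper's: both reduce problem~\eqref{eqn:min_frob} to the stacked-matrix form $\min_{\mtx W_\omega, \{\mtx T_k\}} \frac{1}{N}\sum_{k=1}^N \|\mtx T_k \mtx W_k - \mtx W_\omega\|_F^2$ and then invoke the fact that optimal transport between two uniform distributions supported on $p_\text{I}$ atoms each is realized by a rescaled permutation matrix (you derive this from the Birkhoff--von Neumann theorem, whereas the paper cites \citet[Proposition~2.1]{peyr2020computational}). One minor point in your favor: by making the proportionality constant explicit --- the inner-minimized objective of \eqref{eqn:min_frob} equals $p_\text{I}$ times that of \eqref{eq:wb}, and argmins are invariant under positive rescaling --- you avoid a small sloppiness in the paper's sandwich argument, which writes identities such as $\min_{\mtx T_k}\left\|\mtx T_k \mtx W_k - \mtx W_\omega^*\right\|_F^2 = W_2^2(\mu_k, \mu_\omega(\mtx W_\omega^*))$ that are off by exactly this factor $p_\text{I}$ under its own normalization of the coupling polytope, though the conclusion is unaffected.
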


\begin{figure*}[t!]
\centering
  \includegraphics[width=1.7\columnwidth]{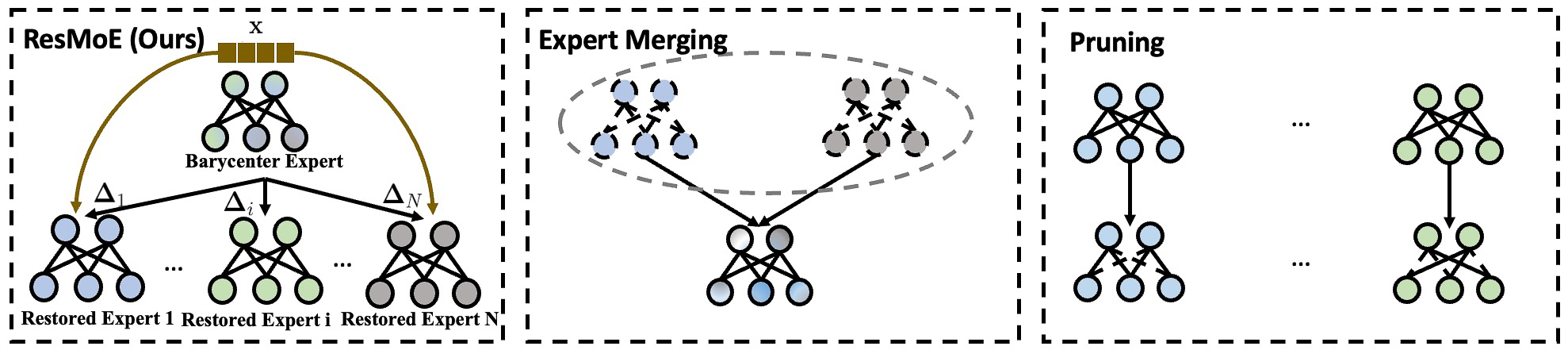}
  \caption{Comparisons between \alg~and baselines. Dash lines denote the connections or neurons are deleted. Expert Merging reduces the number of experts by consolidating several into one, while pruning is applied directly to the experts. In contrast, \alg~compresses the residual and barycenter experts, with the input x directed to the restored experts.}
\label{fig:compare}
\end{figure*}
\noindent \textbf{Remark}. 
We note that all the experts $E_k$ and the barycenter expert $E_\omega$ share the same size, i.e., $\mathbf{W}_k,\mathbf{W}_\omega\in\mathbb{R}^{p_\text{I}\times (2p+1)}$. Therefore, the supports for distributions $\mu_k,\mu_\omega$ are of the same size $p_\text{I}$. In discrete optimal transport,
there is a special property that for two discrete uniform distributions with support of the same size, 
the optimal transport matrix between the two distributions will be re-scaled as a permutation matrix \citep{peyr2020computational}.
The conclusion simplifies the computation since the permutation matrix is orthogonal ($\mtx T_k \left(\mtx T_k\right)^\T = \mtx I$).
The output of the extracted expert $E_\omega(\mtx x) = \mtx W_\omega^{(2)} \sigma\big(\mtx W_\omega^{(1)} \mtx x + \mtx b_\omega^{(1)} \big)$ (adding $\mtx b_\omega^{(2)}$) is automatically aligned with any expert $E_k(\mtx x), \forall k \in [N]$ without additional transformations.

\subsection{Residual Approximation and Expert Restoration}

\label{sec:diff}

As the last step, we need to recover the selected expert from the barycenter expert. We choose two representative methods: unstructured pruning and SVD to remove the redundant parameters. 
(For unstructured pruning, we follow \citet{han2015learning} to zero out the parameters with small magnitude, in order to minimize the loss in problem~\eqref{eqn:min_frob}.)

We will store a compressed matrix $\mtx \Delta_k$ to approximate $\mtx T_k \mtx W_k - \mtx W_\omega$ and then use $\mtx \Delta_k + \mtx W_\omega$ to recover $\mtx T_k \mtx W_k$. We provide more implementation tricks for them in \Cref{sec:memory} and the pseudocode of the algorithm in \Cref{app:pseudocode}. We remark that unstructured pruning produces comparable experimental results while SVD leads to more profound memory reduction.

In \cref{fig:compare}, we visually compare \alg~with previous baselines. Expert merging techniques consolidate multiple experts into a single entity, while pruning strategies involve the direct removal of connections within individual experts.
\alg~first obtains the common barycenter expert and subsequently compresses the residuals between each expert and the barycenter expert, which can be effectively and efficiently used for restoring in the inference stage.

\section{Experimental Results}
\label{sec:exp}

\begin{table}[t!]
\caption{Approximation error of Switch Transformer and Mixtral. The experts are frozen during the fine-tuning stage hence most of the deterministic methods give zero standard deviation. All the numbers are normalized by a factor $p_{\text{I}}$ for better reference. UP stands for Unstructured Pruning, and SP stands for Structured Pruning.}
\label{tab:app_err_per}
\centering
\begin{tabular}{lcc}
\toprule
 & Switch Transformer & Mixtral\\
\midrule
UP & 34.27$\pm$0.00  &10.26$\pm$0.00\\ 
Wanda & 22.93$\pm$0.03  & 13.47$\pm$0.00\\  
SP & 87.00$\pm$0.00 & 27.09$\pm$0.00\\
SVD           & 56.44$\pm$0.00 & 21.70$\pm$0.00 \\
M-SMoE         & 278.76$\pm$0.00 &16.73$\pm$0.00\\
MEO             &  63.25$\pm$0.00& 15.81$\pm$0.00 \\
MLP Fusion          &  83.45$\pm$0.02&27.28$\pm$0.01 \\
\midrule
\alg~(UP)    & \textbf{22.05$\pm$0.02} & \textbf{6.60$\pm$0.01} \\
\alg~(SVD)      & 48.91$\pm$0.01 & 14.63$\pm$0.04\\
\bottomrule
\end{tabular}
\end{table}

This section starts with our experimental setup, followed a preliminary evaluation of \alg's approximation error against various baselines in Table \ref{tab:app_err_per} and its
performance in Tables \ref{tab:switch} and \ref{tab:mixtral}, concluding with an ablation study. All models and methods are implemented in PyTorch. Switch Transformer is fine-tuned on a Tesla V100 32GB GPU, while Mixtral is tested on four such GPUs. Detailed experimental information is available in Appendix \ref{sec:finetune}.

\subsection{Experiment Setup}

\textbf{Model backbones.}
Our evaluation encompasses two primary architectures: the GPT-style causal decoder-only model and the T5-style encoder-decoder model. Specifically, we utilize Mixtral \citep{jiang2024mixtral} for the decoder-only model, featuring 8 experts per layer across 32 layers. 
For the encoder-decoder model, we employ the Switch Transformer \citep{fedus2022switch} with a similar expert-layer configuration (switch-base-8) but with 12 encoder layers followed by 12 decoder layers.
We fix the router and the experts during the supervised fine-tuning stage, based on the observation that preserving the original LLM's universal world information can enhance their performance~\citep{he2023preserving,mukhoti2023finetuning,dou2023loramoe}. This observation
is empirically supported by our findings, which demonstrate improved model performance. 
We also provide the efficiency analysis in Appendix \ref{app:efficiency}.

\begin{table*}[t]
\centering
\caption{
Evaluation results of Switch Transformer on four GLUE NLU tasks (measured in accuracy).
UP stands for Unstructured Pruning, and SP stands for Structured Pruning. We use ``concat'' and ``sep'' to denote the concatenated and separate processing of the expert weights. \textbf{Bold} indicates the best score for each metric, while \textbf{underlined} values represent the second-best.}
\label{tab:switch}
\begin{tabular}{lcccc}
\toprule
&SST-2 & MRPC & CoLA & MNLI\\
\midrule
Switch Transformer &93.92$\pm$0.18  & 89.54$\pm$0.86  & 82.29$\pm$0.28 & 87.82$\pm$0.15 \\
\midrule
UP (concat) &93.12$\pm$0.23  &87.75$\pm$1.12 &81.40$\pm$0.48 &85.32$\pm$0.66 \\
UP (sep) & 90.21$\pm$0.44 &78.92$\pm$3.96 &79.33$\pm$0.80 & 76.06$\pm$5.47\\
Wanda & 92.39$\pm$0.18 &86.74$\pm$0.93 &80.59$\pm$1.01 & 84.20$\pm$0.19\\
SP (concat)&90.67$\pm$0.36  &\underline{88.72$\pm$0.85}  &79.39$\pm$0.42  &85.19$\pm$0.22\\
SP (sep)&83.60$\pm$1.15  &80.88$\pm$2.12  &76.89$\pm$0.93  &81.87$\pm$1.19\\
SVD (concat)&92.47$\pm$0.04 & 87.58$\pm$1.02 &75.90$\pm$0.03 & 85.86$\pm$0.07\\
SVD (sep)&92.59$\pm$0.25 &87.25$\pm$0.93  &\underline{81.62$\pm$0.32} &86.04$\pm$0.05 \\
M-SMoE &\underline{93.31$\pm$0.53} &87.42$\pm$1.06 &80.06$\pm$0.68 &85.72$\pm$0.27 \\
Git Re-Basin & 84.94$\pm$0.86&85.70$\pm$0.46 &61.90$\pm$1.24 & 83.76$\pm$0.84\\
MEO &92.73$\pm$0.39 &86.77$\pm$0.93 &79.99$\pm$0.83 &85.29$\pm$0.37 \\
MLP Fusion &91.86$\pm$0.30 &88.40$\pm$0.59 &79.64$\pm$0.03 &85.72$\pm$0.19 \\
\midrule
\alg~(UP)&\textbf{93.58$\pm$0.07} & \textbf{89.21$\pm$0.49} &\textbf{ 82.13$\pm$0.07} & \textbf{86.13$\pm$0.09}\\
\alg~(SVD) &92.85$\pm$0.05  & 88.18$\pm$0.48  &76.88$\pm$0.08  &\underline{86.08$\pm$0.03} \\ 
\bottomrule
\end{tabular}
\end{table*}

\textbf{Compared methods.}
We compare our method with different types of baselines. For pruning, we employ both single-shot unstructured pruning~\citep{NIPS2015_ae0eb3ee,lee2018snip,NEURIPS2020_46a4378f} and structured pruning~\citep{li2023losparse}. We also employ Wanda \citep{sun2024a} for a more enhanced unstructured pruning method.
We employ truncated SVD following~\citet{NIPS2014_2afe4567}. For merging, we employ M-SMoE  (the better-performing uncompressed version of MC-SMoE) \citep{li2023merge} and MEO \citep{he2023merging}. As the experimental results drop is profound in expert pruning \citep{lu2024expertsequalefficientexpert} (50\%), we employ it here only to Mixtral since our compression rate is more extreme (25\%).
We employ Git Re-Basin \citep{ainsworth2023git} for model fusion.
Note that Git Re-Basin is not initially designed for MoE models, and we dynamically apply it as a fusion (merging) method according to its applicability to merge multiple models.
We also compare our method with MLP Fusion \citep{ai2025mlpfusionefficientfinetuning}, which aims to reduce the intermediate dimension of one expert MLP unit. 
As our compression rate is set to 25\%, we perform different setups to all the methods to make sure they match this setting, detailed in Appendix \ref{sec:compress_ratio}.

\subsection{Preliminary Evaluation of Approximation Error}
\label{subsection:approx}

We calculate the approximation error of each method on the top 8 layers of Switch Transformer and the top 24 layers of Mixtral as a sanity check. The approximation error is defined as the Frobenius norm difference between the original and compressed weight matrices in the experts.
Take Switch Transformer for example, since there is no bias in this model, the approximation error $\epsilon$ for one layer is defined as:
\begin{align*}
    \epsilon = \frac{1}{N} \sum_{k=1}^N \bigg[\left\|\mtx T_k \mtx W_k^{(1)} - \hat{\mtx W}_k^{(1)} \right\|_F^2 +\left\|\mtx W_k^{(2)} \mtx T_k^T - \hat{\mtx W}_k^{(2)} \right\|_F^2\bigg], 
\end{align*}
where $\hat{\mtx{W}}_k$ denotes the matrix post-application of each method. For \alg, $\hat{\mtx W}=\mtx W_\omega+\mtx \Delta_k$, with $\mtx{\Delta}_k$ being the compressed residual matrix derived from each layer, and $\mtx{W}_\omega$ as the Wasserstein barycenter matrix. For merge methods, $\hat{\mtx W}=\mtx W_\omega$, where $\mtx W_\omega$ is the merged center of each group. For methods not involving permutation operations, we set $\mtx{T}_k=\mtx{I}$. Specifically for MLP fusion which reduces the MLP's weight matrix size, it still allows approximation error computation, as detailed in \Cref{sec:approx}.
Notably, as we freeze the experts during fine-tuning, most methods show zero standard deviation. It is worth mentioning that given Wanda is not data-agnostic, it has a standard deviation for different tasks Switch Transformer is fine-tuned on. As for Mixtral, we follow the zero-shot setting of \citet{sun2024a} to use the C4 dataset \citep{raffel2023exploringlimitstransferlearning} to perform the algorithm, hence leading to the zero standard deviation of it on Mixtral.

Table \ref{tab:app_err_per} shows that \alg~achieves the lowest Approximation error among all the methods. We use the acronyms UP to represent Unstructured Pruning and SP for Structured Pruning. The results prove that ResMoE manages to retain not only the output of the original model but also the integrity of the internal matrices. Thus, this preliminary experiment successfully validates Proposition~\ref{prop:equivalence}.

\subsection{Natural Language Understanding}

\textbf{Experiment setup.} Switch Transformer is fine-tuned then compressed during the inference stage on four natural language understanding (NLU) GLUE tasks, SST-2 \citep{socher2013recursive}, MRPC \citep{dolan-brockett-2005-automatically}, CoLA \citep{warstadt-etal-2019-neural} and MNLI \citep{williams-etal-2018-broad}. All the results are reported with accuracy. Here, all the experiments are conducted using different seeds for three rounds. As not all the layers of Switch Transformer are sparse MoE layers, we perform all the methods at the top 4 encoder's MoE layers and the top 4 decoder's MoE layers. 

\begin{table*}[t]
\centering
\caption{Zero-shot results of Mixtral. Most of the methods are deterministic based on the model's weights, resulting in a 0 standard deviation. \textbf{Bold} indicates the best score for each metric, while \textbf{underlined} values represent the second-best. On the WikiText dataset, where perplexity serves as the evaluation metric. The down-arrow notation (\textdownarrow) indicates that a lower metric represents better performance.
}
\label{tab:mixtral}
\begin{tabular}{lcccc}
\toprule
 & WikiText (PPL) \textdownarrow & LAMBADA (ACC) & PIQA (ACC) & WinoGrande (ACC)  \\
\midrule
Mixtral        & 3.87$\pm$0.00  & 74.05$\pm$0.00 & 82.37$\pm$0.00 & 77.11$\pm$0.00  \\
\midrule
UP        & 13.03$\pm$0.00 & 36.10$\pm$0.00 & 72.09$\pm$0.00 & 68.59$\pm$0.00  \\
Wanda & 34.57$\pm$0.00 & 18.73$\pm$0.00 & 63.82$\pm$0.00 & 59.75$\pm$0.00 \\
SP        & 13851.63$\pm$0.00 & 0.00$\pm$0.00 & 53.05$\pm$0.00 & 47.91$\pm$0.00  \\
SVD            & 267.94$\pm$0.00& 16.09$\pm$0.00 & 59.47$\pm$0.00 & 56.99$\pm$0.00  \\
M-SMoE         & 10.45$\pm$0.00 & 58.57$\pm$0.00 & 73.56$\pm$0.00 & 69.61$\pm$0.00  \\
Git Re-Basin   & 9.96$\pm$0.00  & 59.09$\pm$0.00 & 74.70$\pm$0.00 & 69.22$\pm$0.00  \\
MEO            & 8.32$\pm$0.00 & 62.93$\pm$0.00 & 75.84$\pm$0.00 & 70.48$\pm$0.00  \\
Expert Pruning & 8.14±0.00 & 59.07±0.00	& 76.82±0.00 & 70.88±0.00 \\
MLP Fusion         & 80.06$\pm$5.55 & 5.12$\pm$0.49  & 66.67$\pm$0.25 & 56.80$\pm$0.97  \\
\midrule
\alg~(UP)   & \textbf{5.38$\pm$0.04} & \textbf{69.44$\pm$0.16} & \textbf{80.81$\pm$0.19} & \textbf{74.45$\pm$0.23}  \\
\alg~(SVD)   & \underline{7.26$\pm$0.05} & \underline{64.72$\pm$0.15} & \underline{78.02$\pm$0.14} & \underline{73.16$\pm$0.09}  \\
\bottomrule
\end{tabular}
\end{table*}

\begin{table*}[t]
\caption{The comparison of accuracy
between vanilla pruning, average expert, Git Re-Basin expert, vanilla SVD, and our method. Here UP means Unstructured Pruning, WB stands for Wasserstein barycenter. \textbf{Bold} results are better scores
under each metric.}
\label{tab:ablation_rslt}
\centering
\begin{tabular}{lcccccc}
\toprule
& \multicolumn{3}{c}{Switch Transformer} & \multicolumn{3}{c}{Mixtral} \\ 
  & SST-2 & MRPC & MNLI & LAMBADA & PIQA & WinoGrande\\ 
  \midrule
UP &93.12$\pm$0.18  & 87.75$\pm$1.12 & 85.32$\pm$0.66  & 36.10$\pm$0.00 & 72.09$\pm$0.00 & 68.59$\pm$0.00\\
Avg + UP  &92.81$\pm$0.42  & 89.13$\pm$0.96  & 86.00$\pm$0.18 & 67.38$\pm$0.00 & 78.89$\pm$0.00 & 73.95$\pm$0.00  \\
Git + UP  & 92.62$\pm$0.17  & 88.89$\pm$0.56   & \textbf{86.23$\pm$0.13} &  46.11$\pm$0.00 & 70.95$\pm$0.00  &  67.72$\pm$0.00 \\
WB + UP &\textbf{93.58$\pm$0.07} & \textbf{89.21$\pm$0.49} &86.13$\pm$0.09  &  \textbf{69.44$\pm$0.16} & \textbf{80.81$\pm$0.19} & \textbf{74.45$\pm$0.23}\\
\midrule
SVD &92.47$\pm$0.04 & 87.58$\pm$1.02  &85.86$\pm$0.07   & 16.09$\pm$0.00 & 59.47$\pm$0.00 & 56.99$\pm$0.00\\
WB + SVD &\textbf{92.85$\pm$0.05}  & \textbf{88.18$\pm$0.48}  &\textbf{86.08$\pm$0.03}     & \textbf{64.72$\pm$0.15} & \textbf{78.02$\pm$0.14} & \textbf{73.16$\pm$0.09}  \\
\bottomrule
\end{tabular}
\end{table*}

\textbf{Results.} Table \ref{tab:switch} provides the results of Switch Transformer. \alg~(UP) consistently surpasses all baseline methods, while \alg~(SVD) manages to surpass most of the baseline methods, underscoring its efficiency.
Unstructured pruning effectively preserves the original performance, whereas structured pruning, applied neuron-wise, exhibits a more pronounced drop. This observation aligns well with our choice of unstructured pruning over structured pruning.
We also observe that Wanda performs even worse than vanilla unstructured pruning. This may be due to the fact that \citet{sun2024a} set the compression ratio to 50\%, while our setting retains only 25\% of the parameters, leading to a more significant performance drop.
We note a significant difference in performance when applying pruning and SVD to experts, depending on whether the weights were concatenated or separate. 
A possible explanation is that 
pruning dynamically zeroes out less important weights, retaining crucial ones when expert weights are concatenated, indicating the benefit of preserving expert-level relationships for model performance.
In addition, the suboptimal results from Git Re-Basin further support our proposition that previous layer-wise fusion methods are limited in their effectiveness. These methods may fail to adequately capture the complexities of layer interactions, leading to less optimal outcomes when compared to more holistic approaches.
Although most methods manage to preserve the model's performance well in these NLU tasks, they result in a dramatic drop in subsequent zero-shot natural language generation (NLG) tasks.

\subsection{Zero-shot Natural Language Generation}
\label{sec:nlg_exp}

\textbf{Experiment setup.} Mixtral is tested on WikiText (Language Modelling)~\citep{merity2016pointer}, LAMBADA (Language Modelling)~\citep{paperno-etal-2016-lambada}, PIQA (Question Answering)
\citep{bisk2020piqa} 
and WinoGrande (Common Sense Reasoning)~\citep{sakaguchi2021winogrande}. The result of WikiText is given by perplexity, while accuracy metrics are used for the others.
As Mixtral's results are tested with zero-shot and fixed weights, this ensures deterministic outcomes for most of the evaluated methods, leading to a standard deviation of 0 for them. However, the Fusion and OT methods, which seek approximate optimization solutions starting from different initial conditions, exhibit variability and therefore have a non-zero standard deviation. Specifically, for Wanda, we follow the zero-shot setting of \citet{sun2024a} to perform the algorithm on the C4 dataset \citep{raffel2023exploringlimitstransferlearning}.
All the methods are performed on the top 24 layers, and reduce the parameter counts of the experts to 25\%.

\textbf{Results.}
Table \ref{tab:mixtral} presents the results for Mixtral, where both \alg~(UP) and \alg~(SVD) consistently outperform all baseline methods, demonstrating their effectiveness in both NLU and NLG tasks.
Notably, structured pruning results in a substantial performance loss for Mixtral, likely due to its larger hidden dimension (4,096), where neuron-wise weight pruning could lead to significant information loss, a situation reminiscent of the MLP Fusion case. It is important to note that Mixtral's experts are initialized through a copy-and-paste method, as opposed to the random Gaussian initialization in Switch Transformer, leading to more uniform weight distributions in Mixtral. 
This uniformity might contribute to the enhanced performance observed with merge methods in Mixtral. However, the superior performance of \alg~over merge methods further supports our hypothesis about the latter's reduced effectiveness in more generalized scenarios.

\subsection{Ablation Studies}
\label{sec:ablate}

\begin{figure}[h]
\begin{center}
\centerline{\includegraphics[width=1.0\columnwidth]{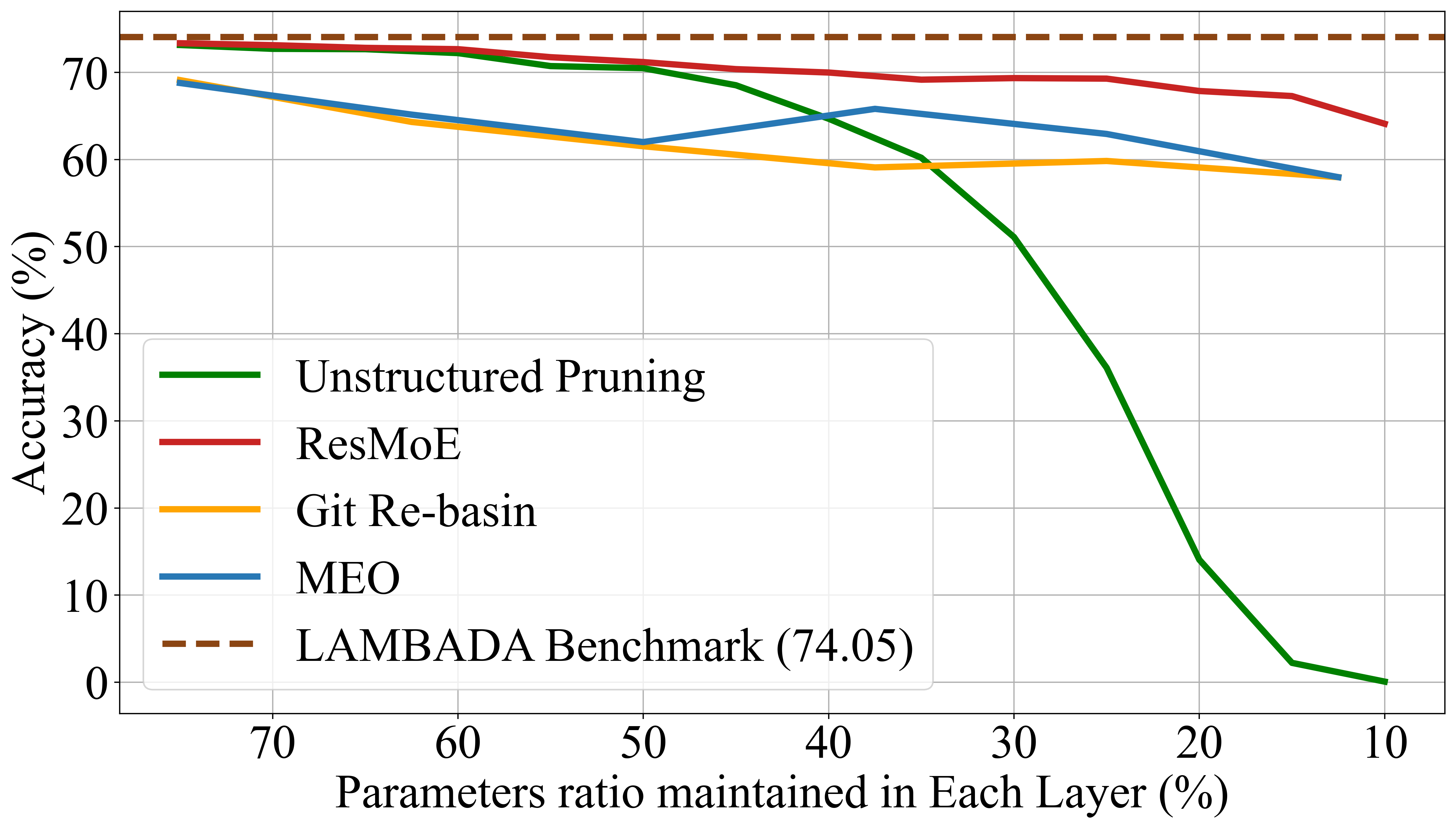}}
\caption{Performance of selected baseline methods on Mixtral w.r.t. various compression rates on the LAMBADA dataset. Note that MEO and Git Re-Basin can only merge experts into at least one so they cannot reach the 10\% compression rate.}
\label{fig:mix}
\end{center}
\vspace{-0.2in}
\end{figure}

\textbf{The effectiveness of Wasserstein barycenter.} In \alg, we choose to compress the residual matrices between the original experts and the barycenter expert. Here, we study the effectiveness of this choice by conducting the vanilla unstructured pruning/SVD without this barycenter expert.

We also conduct the ablation study on the choice of using optimal transport to calculate the barycenter expert. We compare our barycenter with Git Re-Basin \citep{ainsworth2023git} center and with the average center. Considering the generally better performance of unstructured pruning, we conduct these two ablations under unstructured pruning.
The difference between the ablation here and \cref{tab:switch,tab:mixtral} for Git Re-Basin is, during the ablation study, we merge the 8 experts into one expert to obtain the center expert, and follow the framework of \alg~to prune the residuals. While in \cref{tab:switch,tab:mixtral}, we follow the setting of \citet{ainsworth2023git,li2023merge} to merge 8 experts into 2 experts, as our compression ratio is set to 25\%.
Note that we do not contain OT fusion \citep{singh2020model} here, since our experiment on calculating the barycenter in their layer-by-layer form takes more than 4 days to complete, empirically supports our state that the layer-by-layer strategy introduces computational overhead to the process. When performing the algorithm, we observe that Git Re-Basin returns the average center for most of the layers for Mixtral, likely because the dimension of Mixtral reaches a high level (4,096 for the hidden dimension and 14,336 for the inner dimension), and this method is not scalable for such a large model. The outcomes in terms of output performance in Table \ref{tab:ablation_rslt}
clearly demonstrate the beneficial impact of incorporating the barycenter expert.

\textbf{The impact of compression rate.} In the main experiment, we set the compression rate to retain 25\% of the parameter counts. Additionally, we explored the impact of adjusting this rate to different levels. Figure \ref{fig:mix} provides the results of Mixtral on the LAMBADA dataset. Remarkably, with the compression rate set to 10\%, \alg~(UP) manages to achieve results that are not only comparable but even surpass those of baseline methods set at a 30\% compression rate.

\textbf{The scalability of our method.} Our main experiments are conducted on Switch Transformer (switch-base-8) and Mixtral, both with 8 experts per layer. To test the scalability of \alg, we conduct additional experiments on switch-base-16 and DeepseekMoE (64 experts per layer), to verify its ability to maintain performance with an increased number of experts.

Following the same fine-tuning settings as those used for switch-base-8, detailed in Appendix \ref{sec:finetune}, we limited our testing of switch-base-16 to the MRPC dataset due to the constraints of time-intensive supervised fine-tuning. Despite this limitation, Table \ref{tab:switch-16} allows us to draw a similar conclusion as with switch-base-8, that \alg~consistently demonstrates impressive results, affirming its efficacy in maintaining model accuracy with more experts per layer. Additionally, akin to the observations from switch-base-8, we notice that the choice between pruning or applying SVD to the weights, whether concatenated or separated, significantly influences the outcomes. This consistency across different model scales reinforces the impact of these compression techniques on the model's performance, highlighting the nuanced balance between efficiency and accuracy in model optimization. Due to the page limit, the details of DeepseekMoE can be found in \cref{sec:deep}. Additionally, we provide the adaptability of \alg~with expert parallelism and tensor parallelism in \cref{app:parallel}.

\begin{table}[t!]
\centering
\caption{Evaluation Results of Switch Transformer (switch-base-16), with 16 experts per layer. UP stands for Unstructured Pruning, and SP stands for Structured Pruning. We use `concat' to denote concatenate and `sep' to denote separate.}
\label{tab:switch-16}
\begin{tabular}{lcccc}
\toprule
& MRPC \\
\midrule
Switch Transformer  & 90.03$\pm$0.45 \\
\midrule
UP (concat)  &89.47$\pm$0.01 \\
UP (sep)  & 88.48$\pm$0.75\\
SP (concat) &88.40$\pm$0.94\\
SP (sep) &87.34$\pm$0.46\\
SVD (concat) &88.48$\pm$0.62 \\
SVD (sep) &88.48$\pm$0.62 \\
M-SMoE &88.89$\pm$0.59 \\
MEO  &88.51$\pm$0.93 \\
MLP Fusion  &87.91$\pm$0.90 \\
\midrule
\alg~(UP)&\textbf{89.62$\pm$0.01} \\
\bottomrule
\end{tabular}
\end{table}

\section{Conclusions and Limitations}

In this paper, we propose \alg, a data-agnostic MoE model approximation framework that reduces the memory usage of MoE LLMs without retraining. Instead of directly compressing the experts, we turn to approximating the residuals between the Wasserstein barycenter and the original experts. We prove the effectiveness of our method through comprehensive experiments on various backbone models, including Switch Transformer (with an encoder-decoder architecture) and the decoder-only Mixtral and DeepSeekMoE. With \alg, we reduce the counts of parameters by up to 75\% with both successful preservation of the original weight matrices and minimal performance loss in the downstream tasks. The future direction of this work can be the exploration of adopting different compression rates for each layer or even each expert (as experimented in \citet{sharma2023truth} and MC-SMoE~\citep{li2023merge}), or further combining our method with hardware quantization methods.

\textbf{Limitations}. While we have illustrated the success of ResMoE, it is also crucial to understand the limitations that arise in more complex settings: 1) Although producing impressive results, the space efficiency of storing the sparse matrices obtained from unstructured pruning is limited as detailed in \Cref{sec:memory}. 2) \alg~is currently applied during the model inference stage. 
The resulting performance of applying it to fine-tuning is an open question that requires further investigation.

\begin{acks}
This work is supported by National Science Foundation under Award No. IIS-2117902, and Agriculture and Food Research Initiative (AFRI) grant no. 2020-67021-32799/project accession no.1024178 from the USDA National Institute of Food and Agriculture. The views and conclusions are those of the authors and should not be interpreted as representing the official policies of the funding agencies or the government.
\end{acks}

\clearpage

\bibliographystyle{ACM-Reference-Format}
\bibliography{moe}

\appendix

\newpage
\section{Details for Experiments}

The models are implemented through the `transformers' package developed by \citet{wolf-etal-2020-transformers}. When using Switch Transformer for the GLUE classification tasks, we add a classification head according to the setting of T5. 

\subsection{Details of Experiment Settings}
\label{app:experiment_set}

In the zero-shot context, we adhere to the standard practice for zero-shot LLM evaluation. This involves assessing the pre-trained LLM on various downstream tasks without any additional fine-tuning. In the fine-tuning scenario, we employ the typical procedure of initially fine-tuning the pre-trained model on downstream tasks, followed by evaluation during the inference stage. Our approach is concentrated on enhancing the inference stage, ensuring that after the implementation of our methods, no further fine-tuning is required, and the compressed model will be ready-to-use.
\label{sec:finetune}

\begin{table}[h!]
\centering
\caption{Fine-tuning hyper-parameters setting for Switch Transformer.}
\label{tab:finetune}
\begin{tabular}{lcccc}
\toprule
 & Value  \\
\midrule
Optimizer    & AdamW  \\
Adam $\epsilon$        & 1e-08  \\
Adam $\beta$          & (0.9, 0.98)  \\
warm-up steps & 8\\
weight decay&0.01\\
\bottomrule
\end{tabular}
\end{table}

For Mixtral, we adhere to the model's configuration card, as no hyperparameter tuning is required for zero-shot tasks. For Switch Transformer, we employ AdamW optimizer with a linear warm-up step count of 8. We explore the hyper-parameters settings during the supervised fine-tuning stage.
For Switch Transformer, the learning rate is searched in the range of \{1e-4,2e-4,3e-4,5e-4,1e-3\}, the batch size within the range of \{16,32,64\}, and the training epoch within the range of \{3,5,10,15,20\}. The details of the AdamW optimizer which is fixed for all datasets are given in table \ref{tab:finetune}.

To ensure comparability across methods, we standardize the parameter count reduction for the experts to around 75\%, which means 25\% of the parameters will be retained.
All the methods are performed at the top 24 layers of Mixtral, and the top 8 MoE layers of Switch Transformer. All the methods are applied during the inference stage. 

\subsection{Experiment Results of DeepSeekMoE}
\label{sec:deep}

DeepSeekMoE \citep{dai2024deepseekmoe} features more fine-grained experts compared to alternative structures, with each expert sized at just 8.7M, markedly smaller than Mixtral's 176.2M. It introduces a unique, independent shared expert atop each MoE layer, aiming to encapsulate universal information across layers. This approach is inspired by observations from Mixtral's router analysis, which indicates a roughly equal routing probability for each expert during the inference stage, suggesting the presence of universal knowledge within each expert. Consequently, an additional shared expert is integrated, anticipated to store universal information and thereby enhance the diversity of the remaining experts. In our experiments with DeepSeekMoE, we exclude this shared expert, considering its anticipated information content significantly differs from that of the non-shared experts.

Table \ref{tab:deep} provides the results for DeepSeekMoE. Note that the results for LAMBADA dataset are not included here, since DeepSeekMoE produces extremely bad results for this dataset (0.04\% Accuracy). Even though the experts of DeepSeekMoE are also initialized through copy-and-paste, the existence of the shared expert distinguishes those experts in the MoE layers from each other, leading to the poor performance of the merge methods, which aligns with our proposition that merge methods will potentially impair the generalization ability of the original model.

\begin{table*}[h!]
\centering
\caption{Zero-shot results of DeepSeekMoE. For Pruning, we choose Unstructured Pruning over Structured Pruning based on the observations from the previous experiments that Unstructured Pruning usually has better results on NLG tasks.}
\label{tab:deep}
\begin{tabular}{lcccc}
\toprule
 & WikiText (PPL)\textdownarrow &  PIQA (ACC) & WinoGrande (ACC)  \\
\midrule
DeepSeekMoE    & 6.51   & 78.84 & 68.75  \\
\midrule
Pruning        & 10.46  & 73.12 & 62.83  \\
SVD            & 26.93  & 63.06 & 57.46  \\
M-SMoE &34.76 &62.79 &54.22\\
MEO&33.94&62.35&54.14\\
\midrule
\alg~(UP)   & \textbf{10.39$\pm$0.10} & \textbf{73.39$\pm$0.01} & \textbf{64.35$\pm$0.01} \\
\bottomrule
\end{tabular}
\end{table*}

\subsection{Compression Setting for Each Method}
\label{sec:compress_ratio}

In order to match our setting of the 75\% compression rate, each method has to be tailored differently.
For Pruning, we mask 75\% weights units with the lowest L1-norm within each expert. %
For SVD, the details of calculating the rate of the parameters can be referred to in Appendix \ref{sec:SVD}.
For M-SMoE, Git Re-Basin, and MEO, we reduce the expert count of each MoE layer from 8 to 2. 
For MLP Fusion, we reduce the intermediate dimension to 25\%.
For \alg, we mask 75\% weights units with the lowest L1-norm within each residual matrix. We do not count the overhead storage of the barycenter experts here since we aim to prove the effectiveness of our algorithm, and as the number of experts grows, the redundancy of this overhead will diminish. We provide additional experiments performed at switch-base-16 and DeepSeekMoE (64 experts per layer) \citep{dai2024deepseekmoe} as a support, which can be found in \Cref{tab:switch-16} and \cref{sec:deep}.

\subsection{The Parameter Count of SVD}
\label{sec:SVD}
For SVD, to make the the parameters retained for each expert equal, for Switch Transformer we have:
\begin{align*}
    p_\text{I}\times k+ k + k\times 2p &\approx k\times (p_\text{I}+2p)\\
    s\times p_\text{I}\times 2p &= 2spp_\text{I},
\end{align*}
where $s$ is the parameter rate we retain (25\% here), and $k$ is the number of top-$k$ singular values in SVD. For Switch Transformer, we have $p_\text{I}=4p$, so $k=\dfrac{1}{3}sp_\text{I}$.

For Mixtral:
\begin{align*}
    p_\text{I}\times k+ k + k\times 3p &\approx k\times (p_\text{I}+3p)\\
    s\times p_\text{I}\times 3p &= 3spp_\text{I},
\end{align*}
here we have $p_\text{I}=3.5p$, so $k=\dfrac{6}{13}sp_\text{I}$. 

For DeepSeekMoE:
\begin{align*}
    p_\text{I}\times k+ k + k\times 3p &\approx k\times (p_\text{I}+3p)\\
    s\times p_\text{I}\times 3p &= 3spp_\text{I},
\end{align*}
here we have $p_\text{I}=\dfrac{11}{16}p$, so $k=\dfrac{48}{59}sp_\text{I}$.

\subsection{Evaluation of Approximation Error for MLP Fusion}
\label{sec:approx}

We first reformulate MLP fusion~\citep{ai2025mlpfusionefficientfinetuning} with the notations in this paper. 
In computing the fused MLP for expert $k$, we obtain the centroid weight/bias $\mtx{\wt W}_k = \brkt{\mtx{\wt W}^{(1)}_k, \mtx{\tilde b}^{(1)}_k, (\mtx{\wt W}^{(2)}_k)^T} \in \mb R^{c \times (2p + 1)}$, as well as the one-hot clustering matrix $\mtx C_k \in \mb R^{c \times p_\text{I}}$ indicating how the $p_\text{I}$ neurons are partitioned into $c$ clusters.
MLP fusion then proposes to compute
\begin{align*}
\mtx{\wt W}^{(2)}_k \paren{\mtx C_k \mtx C_k^T} \sigma \paren{\mtx{\wt W}^{(1)}_k \mtx x + \mtx{\tilde b}^{(1)}_k} + \mtx b^{(2)}_k
\end{align*}
as the approximation to the original expert $k$.

\citet{ai2025mlpfusionefficientfinetuning} suggest the expression above is equivalent to replacing $\mtx W_k = \brkt{\mtx W_k^{(1)}, \mtx b_k^{(1)}, (\mtx W_k^{(2)})^T} \in \mb R^{p_\text{I} \times (2p + 1)}$ by $\mtx C_k^T \mtx{\wt W}_k$, 
considering 
\begin{align*}
\mtx{\wt W}^{(2)}_k \paren{\mtx C_k \mtx C_k^T} \sigma \paren{\mtx{\wt W}^{(1)}_k \mtx x + \mtx{\tilde b}^{(1)}_k} + \mtx b^{(2)}_k \\
= \paren{\mtx{\wt W}^{(2)}_k \mtx C_k} \sigma \paren{\mtx C_k^T \mtx{\wt W}^{(1)}_k \mtx x + \mtx C_k^T \mtx{\tilde b}^{(1)}_k} + \mtx b^{(2)}_k.
\end{align*}
We can thus calculate $\norm{ \mtx W_k - \mtx C_k^T \mtx{\wt W}_k }_F^2$ as the approximation error on expert $k$ for MLP fusion.

\subsection{Details of the Datasets}

\label{app:dataset_license}
We provide the details of the datasets we used in the experiment along with their license here. The statistics can be found in \cref{tab:data_class,tab:data_zeroshot}.
\begin{table*}[h!]

    \centering
    \caption{Dataset statistics of fine-tuned classification tasks.}
    \label{tab:data_class}
    \begin{tabular}{lcccc}
    \toprule
       Dataset &Category  & Train size & Test Size & Classes \\
    \midrule   
    SST-2 &Sentiment Analysis &67,349&872&2\\
    MRPC & Paraphrase Identification&3,668&408&2\\
    CoLA&Linguistic Acceptability Judgment &8,551&1,043&2\\
    MNLI& Textual Entailment &392,702&9,815&3\\

    \bottomrule
    \end{tabular}

\end{table*}

\begin{table*}[h!]
    \centering
    \caption{Dataset statistics of zero-shot tasks.}
    \label{tab:data_zeroshot}
    
    \begin{tabular}{lccc}
    \toprule
       Dataset   &Category& Test Size & Average Text Length \\
    \midrule   
    PIQA &Commonsense Reasoning &1,838 &36.08\\
    WikiText &Language Modeling &4,358 &295.00\\
    WinoGrande &Commonsense Reasoning&1,267&100.78\\
    LAMBADA & Text Understanding & 4,896&341.72\\

    \bottomrule
    \end{tabular}
    
\end{table*}

\begin{itemize}
    \item PIQA \citep{bisk2020piqa}: PIQA, or Physical Interaction Question Answering, is a benchmark dataset that assesses AI systems' commonsense reasoning abilities regarding physical knowledge. It challenges models with multiple-choice questions related to everyday physical interactions, testing their understanding of object manipulation and functionality in real-world scenarios. While humans perform well on PIQA, it presents a significant challenge to AI models, making it crucial for advancing AI research, especially in robotics and conversational AI. PIQA is licensed under Academic Free License v3.0.
    \item WikiText\citep{merity2016pointer}: WikiText-103 is a widely-used dataset in Natural Language Processing, ideal for language modeling and text generation tasks. Derived from verified Wikipedia articles, it offers over 100 million tokens of well-structured text, preserving original formatting. Its extensive vocabulary and varied syntax make it a valuable resource for training advanced language models. WikiText-103 serves as a crucial benchmark for evaluating language models' performance in handling real-world textual data, with a license of CC BY-SA 3.0.
    \item WinoGrande\citep{sakaguchi2021winogrande}: Winogrande, an extension of the Winograd Schema Challenge, consists of ambiguous sentence pairs requiring deep language understanding and commonsense reasoning to resolve. It aims to overcome limitations in previous datasets and assesses AI models' ability to comprehend nuanced language and context, making it vital for advancing natural language understanding. Winogrande is licensed under CC-BY.
    \item LAMBADA\citep{paperno-etal-2016-lambada}: LAMBADA is a challenging benchmark designed for evaluating computational models' language understanding, focusing on predicting the final word in a passage. It requires models to grasp broad context and long-range dependencies within text passages. LAMBADA pushes the boundaries of language models, particularly in handling complex, context-dependent linguistic phenomena, making it a valuable tool for advancing natural language processing. It is licensed under CC BY 4.0.
    \item SST-2\citep{socher2013recursive}: SST-2, the Stanford Sentiment Treebank version 2, is a popular dataset for sentiment analysis. It contains movie review sentences labeled as positive or negative, excluding neutral sentences, providing a binary classification task. This dataset is notable for its fine-grained annotation, as it includes sentiment labels for every subphase within the sentence parse trees. SST-2 is widely used for training and evaluating models on sentiment analysis, testing their ability to understand nuanced emotional tones in text, with the license of CC0: Public Domain.
    \item MRPC\citep{dolan-brockett-2005-automatically}: 
The Microsoft Research Paraphrase Corpus (MRPC) evaluates models on paraphrase identification by using sentence pairs from online news sources. MRPC is a part of the GLUE benchmark and is valuable for assessing a model's ability to understand and compare semantic content in sentences, especially in semantic analysis tasks. The license of MRPC is unknown.
    \item CoLA\citep{warstadt-etal-2019-neural}: The Corpus of Linguistic Acceptability (CoLA) assesses models' linguistic acceptability judgment. It distinguishes between grammatically acceptable and unacceptable sentences, emphasizing the importance of grammatical understanding in language comprehension and model evaluation. The license for CoLA is not specified.
    \item MNLI\citep{warstadt-etal-2019-neural}: The Multi-Genre Natural Language Inference (MNLI) dataset is a diverse corpus for natural language understanding tasks, focusing on textual entailment. It includes pairs of sentences and challenges models to determine whether the second sentence entails, contradicts, or remains neutral to the first sentence. MNLI's wide range of genres and diverse content makes it a robust benchmark for evaluating models in natural language inference tasks. Most of the data are under the OANC’s license, with the other falling under several permissive licenses, a Creative Commons Share-Alike 3.0 Unported
License, and  Creative Commons Attribution 3.0 Unported Licenses.
\end{itemize}

\subsection{Implementation Trick}
\label{sec:memory}

In our application of unstructured pruning with \alg, a key consideration is its impact on memory storage. The Pytorch version we use supports only the Coordinate format (COO) for sparse matrices, which necessitates storing indices as int64. For instance, in an MLP of Mixtral, the original memory footprint is 672MB. However, a version pruned to 75\% sparsity consumes more memory, around 840MB, with 672MB used just for storing indices. If the indices could be stored as int16, the memory requirement for the indices would be reduced to 168MB, thus the memory of the entire MLP unit
would be significantly reduced to 336MB. Furthermore, if the index is stored in the format of CSR instead of COO, the memory size can be reduced to 252MB. Although addressing this limitation falls outside our current scope, we aim to explore solutions to this challenge in future work.

On the other hand, when choosing SVD as the compression method, it will be able to directly reduce memory usage since SVD will reduce the size of each matrix. We remark that even though utilizing SVD here leads to slightly worse results than unstructured pruning, it still performs better compared to the baseline methods. Also, when the number of experts goes up, the overhead introduced by the center expert diminishes. 

Based on such settings, we provide the memory information on Mixtral (8 experts per layer) and DeepSeekMoE (64 experts per layer) in \Cref{tab:memory} with the overhead center expert included. 
Also, when the number of experts goes up, the overhead memory introduced by the center expert diminishes.

\begin{table}[t!]
\caption{Memory usage of one MoE layer in Mixtral \& DeepSeekMoE (MB).}
    \centering
    \begin{tabular}{lcc}
     \toprule
& Mixtral  & DeepSeekMoE \\
\midrule
Full & 5,376 & 2,112 \\
\midrule
UP & 2,016 & 1,056 \\
SP & 1,344 & 528 \\
SVD & 1,344 & 528 \\
M-SMoE & 1,344 & 528 \\
Git Re-Basin & 1,344 & 528 \\
MEO & 1,344 & 528 \\
MLP Fusion & 1,344 & 528 \\
\midrule
ResMoE (UP) & 2,688 & 1,089 \\
ResMoE (SVD) & 2,016 & 561 \\
\bottomrule
    \end{tabular}
    
    \label{tab:memory}
\end{table}

\subsection{Efficiency Evaluation}
\label{app:efficiency}

In addition to the memory information, we also provide the evaluation of runtime
and FLOPs \citep{blalock2020stateneuralnetworkpruning}. 

The runtime in \Cref{tab:Runtime} is obtained by testing on 2 A100 GPUs on the WinoGrande Dataset with a batch size of 64. It is worth noting that the runtime of the merged methods is even slower compared to the original Mixtral model. This is likely due to the code we referred from \cite{li2023merge}, which only creates references from the experts that are merged but does not exactly reduce the number of experts in the model. Note that the sparse matrices induced by unstructured pruning are stored as normal matrices instead of sparse matrices here. We could observe that \alg~does not influence the time complexity while reducing the memory.

For the FLOPs evaluation, as shown in \Cref{tab:FLOPs}, structured pruning and MLP Fusion, which reduce the intermediate dimension of weight matrices, lead to a significant reduction in FLOPs. It is also important to note although unstructured pruning can reduce FLOPs, it does not reduce space storage as ResMoE does as detailed in \Cref{sec:memory}. In this regard, ResMoE (UP) and the merge methods maintain similar FLOPs compared to the original Mixtral. ResMoE (SVD) has more FLOPs compared to vanilla SVD due to the extra FLOPs brought in with the center expert, but it still manages to reduce the FLOPs compared to the original model.

\begin{table}[t!]
\caption{Runtime of Mixtral on Winogrande.}
    \centering
    \begin{tabular}{lc}
     \toprule
& Runtime (s) \\
\midrule
Mixtral&	39.44$\pm$0.30 \\
\midrule
UP	&39.01$\pm$0.21 \\
SP	&37.15$\pm$0.22\\
SVD	& 38.96$\pm$0.31\\
M-SMoE	&49.19$\pm$0.12\\
Git Re-Basin	&48.53$\pm$0.09\\
MEO	&49.51$\pm$0.18\\
MLP Fusion	&38.53$\pm$0.26\\
\midrule
ResMoE (UP)	&38.85$\pm$0.28\\
ResMoE (SVD)	&38.12$\pm$0.19\\
\bottomrule
    \end{tabular}
    
    \label{tab:Runtime}
\end{table}

\begin{table}[t!]
\caption{FLOPs evaluation of Mixtral \& DeepSeekMoE.}
    \centering
    \begin{tabular}{lcc}
     \toprule
& Mixtral (TFLOPs) & DeepSeekMoE (GFLOPs) \\
\midrule
Full&	3.26 &670.46\\
\midrule
UP	&1.64 &460.24 \\
SP	&1.64 &460.24\\
SVD & 2.21& 480.52\\
M-SMoE	& 3.26       & 670.46\\
Git Re-Basin	&3.26&670.46\\
MEO	&3.26            &670.46\\
MLP Fusion	&1.64    &460.24\\
\midrule
ResMoE (UP)	&3.26&670.46\\
ResMoE (SVD)	&2.73&526.93\\
\bottomrule
\end{tabular}

\label{tab:FLOPs}
\end{table}

\subsection{Pseudocode of \alg}
\label{app:pseudocode}

In considering of reproducibility, we provide the algorithm to perform ResMoE on a model in \Cref{alg:resmoe}, and the dynamic load inference process in \Cref{alg:resmoe_inference}. Please find the source code at \url{https://github.com/iDEA-iSAIL-Lab-UIUC/ResMoE}.

\begin{algorithm}[h!]
\caption{ResMoE}
\label{alg:resmoe}
\begin{algorithmic}
\State {\bfseries Input:} experts weights $E_1, \ldots, E_n$, sparsity ratio $r$
\State {\bfseries Output:} center expert $C$, compressed residuals $R_1, \ldots, R_n$

\State // Step 1: Calculate the center expert using free-support Wasserstein barycenter
\State $C \gets \text{free\_support\_wasserstein\_barycenter}(E_1, \ldots, E_n)$

\State // Step 2: Calculate the residual matrices
\For{$i = 1$ {\bfseries to} $n$}
\State $R_i \gets E_i - C$
\EndFor

\State // Step 3: Apply compression technique to residual matrices with sparsity $r$
\For{$i = 1$ {\bfseries to} $n$}
\State $R_i \gets \text{compress}(R_i, r)$
\EndFor

\State {\bfseries return} $C, R_1, \ldots, R_n$
\end{algorithmic}
\end{algorithm}

\begin{algorithm}[h!]
\caption{Inference}
\label{alg:resmoe_inference}
\begin{algorithmic}
\State {\bfseries Input:} input $x$, center expert $C$, pruned residuals $R_1, \ldots, R_n$, selected experts subscript set $S$
\State {\bfseries Output:} inference result $y$

\State // Step 1: Reconstruct and dynamically load the compressed experts
\State $E_S \gets \emptyset$
\For{$R_i \in R_S$}
\State $E_i \gets C + R_i$
\State $E_S \gets E_S \cup {E_i}$
\EndFor

\State // Step 2: Perform inference using the recovered experts
\State $y \gets \text{perform\_inference}(x, E_S)$

\State {\bfseries return} $y$
\end{algorithmic}
\end{algorithm}

\section{Miscellanies}

\subsection{Adaptability with Expert Parallelism and Tensor Parallelism}
\label{app:parallel}

While the primary focus of this paper is on the inference stage and not on saving memory usage during training, we acknowledge that expert parallelism is an important consideration for the scalability and efficiency of MoE models. 

One feasible approach is to assign different center experts to each GPU, allowing each center expert to handle the experts on its respective GPU during inference. This extension of ResMoE could potentially improve the model's performance by capturing more diverse and complex patterns in the data.

ResMoE is also compatible with tensor parallelism. As shown in \Cref{eqn:mlp_sum}, the output of an MLP in an MoE model can be expressed as the summation of several sub-MLPs. In the context of ResMoE, we can view each sub-MLP as the combination of a center expert and a compressed residual matrix. To utilize Megatron tensor sharding \citep{shoeybi2020megatronlm}, we can partition the center expert and compressed residual matrices into different chunks, with corresponding index ranges residing on different GPUs.

During inference, the input data would be parallelized and passed to the appropriate center expert and residual matrix chunks on each GPU. The partial results from each GPU would then be combined to obtain the final output. This parallelization strategy aligns well with the Megatron tensor sharding approach, which aims to distribute the computation across multiple GPUs to improve efficiency and scalability.

\subsection{Illustration of Model Fusion}
\label{app:modelfusion}

In the work of OT Fusion~\cite{singh2020model}, for the first layer in an two-layer MLP (for example), their algorithm takes the weights $(\mtx W_k^{(1)}, \mtx b_k^{(1)})$ in each expert $E_k$ and in $E_\omega$
as the source distributions and the target distribution, respectively.
They directly take $\mtx W_k^{(1)}$ as the design points (empirical distributions) and then regard their free-support Wasserstein barycenter as the common pattern extraction of the first layer in each expert,
returning $\mtx W_\omega^{(1)}$ and the corresponding permutation matrix $\mtx T_k^{(1)}$'s (obtained from the transport matrices) for $\mtx W_k^{(1)}$. 
Accordingly, they then pre-align the second layers as $\mtx W_k^{(2)} \mtx T_k^{(1)}$
, repeat the procedure above and similarly obtain the extracted layer $\mtx W_\omega^{(2)}$ and the permutation matrices $\mtx T_k^{(2)}$. To recover the $k$-th expert $E_k$, their barycenter expert needs to be transformed as $(\mtx T_k^{(2)})^\T E_\omega(x)$, to fix the order of the output elements. We remark this, along with pre-alignment $\mtx W_k^{(2)} \mtx T_k^{(1)}$ bring overhead during algorithm performing stage.
This is supported by the process of applying OT fusion to calculate the barycenter expert on Mixtral, which takes more than 4 days to complete the whole process, while \alg only takes less than a day. A possible explanation is that Mixtral's MLP consists of three layers, so the layer-by-layer strategy costs about three times more than our method.

\subsection{The MLP Form for Mixtral and DeepSeekMoE}
\label{sec:mix_deep}
The output of an MLP in Mixtral and DeepseekMoE can be rewritten into:
\begin{small}
\begin{align*}
E_{k}(\mtx x) 
&= \mtx W_{k}^{(2)}\cdot\operatorname{SwiGLU}(\mtx x) +\mtx b^{(2)}_k \\
&= \mtx W_{k}^{(2)} \cdot \left[\sigma\left(\mtx W_{k}^{(1)}\cdot \mtx x+\mtx b^{(1)}_k\right)\odot\left(\mtx W_{k}^{(3)} \cdot \mtx x+\mtx b^{(3)}_k\right)\right] +\mtx b^{(2)}_k\\
&= \sum_{i=1}^{p_{I}} \mtx W^{(2)}_{k,\cdot,i}\cdot\left[\sigma\left(\left \langle \mtx W_{k,i,\cdot}^{(1)},\mtx x \right \rangle +\mtx b_{k, i}^{(1)} \right)\cdot \left(\left\langle \mtx W^{(3)}_{k,i,\cdot},\mtx x \right \rangle +\mtx b_{k, i}^{(3)}\right) \right]+\mtx b^{(2)}_k,
\end{align*}
\end{small}
where $\sigma(\mtx x)=\operatorname{Swish}_{\beta}(\mtx x)=\mtx x \sigma(\beta \mtx x)=\dfrac{\mtx x}{1+e^{-\beta \mtx x}}$, with $\beta$ setting to $1$.

Similarly, for Mixtral and DeepSeekMoE, the extraction objective is:
\begin{align*}
    \min_{\substack{\mtx W_\omega^{(1)}, \mtx b_\omega^{(1)}, \mtx W_\omega^{(3)}, \mtx b_\omega^{(3)}, \mtx W_\omega^{(2)}\\  \mtx T_k \in \m P, k \in [N]}}
    &\frac{1}{N}\sum_{k=1}^N \bigg[\Big\|\mtx T_k \paren{\mtx W_k^{(1)}, \mtx b_k^{(1)}, \mtx W_k^{(3)}, \mtx b_k^{(3)}} \\
    &\qquad - \paren{\mtx W_\omega^{(1)}, \mtx b_\omega^{(1)}, \mtx W_\omega^{(3)}, \mtx b_\omega^{(3)}} \Big\|_F^2 \\
    &\qquad + \left\|\mtx W_k^{(2)} \mtx T_k^\T - \mtx W_\omega^{(2)} \right\|_F^2\bigg].
\end{align*}

Then we can extend the $\mtx W_k$ to:
\begin{align*}
    \mtx W_k = \brkt{\mtx W_k^{(1)}, \mtx b_k^{(1)},\mtx W_k^{(3)},\mtx b_k^{(3)}, (\mtx W_k^{(2)})^\T},
\end{align*}
and \alg~can then be similary applied to Mixtral and DeepSeekMoE.

\section{Proof of Proposition~\ref{prop:equivalence}}
\label{sec:proof_equivalence}

For the reader's convenience, we recall \Cref{prop:equivalence} as follows.
\equivalence*
\begin{proof}

We recall $\mu_i,\mu_{\omega}$ are uniformly distributed over the $p_\text{I}$ rows of $\mtx W_{i}$ and $\mtx W_{\omega}$, respectively. $\text{OT}\left(\mu_i, \mu_\omega(\mtx W_\omega)\right)$ is the optimal transport matrix $\mtx M$ from $\mu_i$ to $\mu_\omega$ of the following problem:
\begin{equation}
\label{prob:pr_ot}
\text{OT}(\mu,\nu)\defeq\underset{\mtx M\in U(\alpha,\beta)}{\text{argmin}}\left \langle \mtx M,\mtx C\right \rangle,
\end{equation}        
where $\mtx C=\left[ \|\mtx W_{k_{i}} - \mtx W_{\omega_{j}})\|^{2} \right]_{ij}\in \mathbb{R}^{p_{{I}}\times p_{{I}}}, U(\frac{\mathbf{1}_{p_{{I}}}}{p_{{I}}},\frac{\mathbf{1}_{p_{{I}}}}{p_{{I}}})\defeq\{\mtx M\in  \mathbb{R}_{+}^{p_{{I}}\times p_{{I}}} \mid \mtx M\mathbf{1}_{p_{{I}}}=\frac{\mathbf{1}_{p_{{I}}}}{p_{{I}}},\mtx M^{T}\mathbf{1}_{p_{{I}}}=\frac{\mathbf{1}_{p_{{I}}}}{p_{{I}}}\}$. 

We first relate the transport matrix to the permutation matrices $\mtx T_k$'s.
\citet[Proposition~2.1]{peyr2020computational} shows the optimal solution to problem~\eqref{prob:pr_ot} is exactly a permutation matrix, up to a constant factor $p_\text{I}$.
Now straightforwardly, problem~\eqref{eqn:min_frob} can be rewritten as:
\begin{equation}
\label{eq:fin}
    \min_{\substack{\mtx W_\omega \\ \mtx  T_k \in  P, k \in [N]}}
    \frac{1}{N}\sum_{k=1}^N \bigg[\left\| \mtx  T_k {\mtx  W_k} - {\mtx W_\omega} \right\|_F^2\bigg],
\end{equation}
where $\mtx W_k = [\mtx W_k^{(1)},\mtx b_k^{(1)},(\mtx W_k^{(2)})^T] \in \mathbb{R}^{p_{{I}}\times (2p+1)}$, and $\mtx W_\omega = [\mtx W_\omega^{(1)},\mtx b_\omega^{(1)},(\mtx W_\omega^{(2)})^T]\in \mathbb{R}^{p_{{I}}\times (2p+1)}$.

We denote the objective function in problem~\eqref{eq:fin} as $f(\mtx W_\omega; \set{\mtx T_k}_{k=1}^N ) = \sum_{k=1}^N \frac{1}{N}[\left\| \mtx  T_k {\mtx  W_k} - {\mtx W_\omega} \right\|_F^2]$,
and take $\mtx W^*_\omega$ as the optimal solution to the Wasserstein barycenter problem~\eqref{eq:wb}.
For the given $\mtx W^*_\omega$, we further denote $\mtx T_k^*\defeq\underset{\mtx T_k}{\text{argmin}}f(\mtx W_w^*;\mtx T_k), \forall k \in [N]$. 
The rest of the proof is to show $f(\mtx W^*_\omega; \set{\mtx T_k^*}_{k=1}^N ) = $ \Cref{eqn:min_frob}.

\circled{1} We start with the first side: \Cref{eqn:min_frob} $\leq f(\mtx W^*_\omega; \set{\mtx T_k^*}_{k=1}^N )$. 
We indeed immediately have:
\begin{align*}
    \eqref{eqn:min_frob} = \underset{\mtx W_\omega, \mtx T_k}{\min}f(\mtx W_\omega;\set{\mtx T_k}_{k=1}^N ) \leq f(\mtx W^*_\omega; \set{\mtx T_k^*}_{k=1}^N ),
\end{align*}
due to the definition of $\min$ in \Cref{eqn:min_frob}.

\circled{2} For the other direction, we first show the barycenter loss $\eqref{eq:wb} \leq \eqref{eqn:min_frob}$.
Through the definition of $W_2$ distance, we have
\begin{align*}
    W_{2}^{2}(\mu_i, \mu_\omega( \mtx W_\omega))&\leq \left\| \mtx  T_k {\mtx  W_k} - {\mtx W_\omega} \right\|_F^2,\: \forall \mtx T_k, \mtx W_\omega\\
\Rightarrow
    \frac{1}{N}\sum_{k=1}^{N} W_{2}^{2}(\mu_i, \mu_\omega( \mtx W_\omega))&\leq\frac{1}{N}\sum_{k=1}^{N} \left\| \mtx  T_k {\mtx  W_k} - {\mtx W_\omega} \right\|_F^2,\: \forall \mtx T_k, \mtx W_\omega\\
\Rightarrow
    \frac{1}{N} \sum_{k=1}^{N} W_{2}^{2}(\mu_i, \mu_\omega( \mtx W_\omega)) &\leq \frac{1}{N} \sum_{k=1}^{N} \underset{\mtx T_k}{\min} \left\| \mtx  T_k {\mtx  W_k} - {\mtx W_\omega} \right\|_F^2,\: \forall \mtx W_\omega.
\end{align*}
The inequality will still hold when we minimize the two sides both over $\mtx W_\omega$:
\begin{align*}
\eqref{eq:wb} &= \underset{\mtx W_\omega}{\min} \frac{1}{N}\sum_{k=1}^{N} W_{2}^{2}(\mu_i, \mu_\omega( \mtx W_\omega)) \\
&\leq \underset{\mtx W_\omega}{\min}\frac{1}{N} \sum_{k=1}^{N} \underset{\mtx T_k}{\min} \left\| \mtx  T_k {\mtx  W_k} - {\mtx W_\omega} \right\|_F^2 \\
&= \eqref{eqn:min_frob}.
\end{align*}
To close the proof, it suffices to show that $f(\mtx W_\omega^*;\mtx T_k^*)=\eqref{eq:wb}$.
We show the equivalence as follows:
\begin{align*}
    f(\mtx W_\omega^*;\mtx T_k^*) &= \frac{1}{N}\sum_{k=1}^N \left\| \mtx  T_k^* {\mtx  W_k} - {\mtx W_\omega^*} \right\|_F^2 \\
    &= \frac{1}{N}\sum_{k=1}^N \underset{\mtx T_k}{\min} [\left\| \mtx  T_k {\mtx  W_k} - {\mtx W_\omega^*} \right\|_F^2]\\
    &=\frac{1}{N}\sum_{k=1}^N W_{2}^{2}(\mu_i,\mu_\omega( \mtx W_\omega^*)),
\end{align*}
where the last equation holds again thanks to \citet[Proposition~2.1]{peyr2020computational}.
Using the fact that $W^*_\omega$ is the optimal solution to Wasserstein barycenter problem \eqref{eq:wb}, we finally attain
\begin{align*}
f(\mtx W_\omega^*;\mtx T_k^*) &= \frac{1}{N}\sum_{k=1}^N W_{2}^{2}(\mu_i,\mu_\omega( \mtx W_\omega^*)) \\
&= \underset{\mtx W_\omega}{\min}\dfrac{1}{N}\sum_{i=1}^{N}W_{2}^{2}(\mu_i, \mu_\omega( \mtx W_\omega)) \\
    &= \eqref{eq:wb},
\end{align*}
which completes the proof.
\end{proof}

\end{document}